\newcommand{\calD}{{\cal D}}
\newcommand{\calF}{{\cal F}}
\newcommand{\calI}{{\cal I}}
\newcommand{\calW}{{\cal W}}
\newcommand{\calX}{{\cal X}}
\newcommand{\calY}{{\cal Y}}
\newcommand{\defeq}{\stackrel{\text { def. }}{=}}
\newcommand{\eps}{\varepsilon}
\newtheorem{theorem}{Theorem}
\newtheorem{corollary}[theorem]{Corollary}
\newtheorem{lemma}[theorem]{Lemma}
\newtheorem{assumption}{Assumption}
\newtheorem{lemmaS}{Lemma}
\newcommand{\zx}[1]{\textcolor{blue}{\textbf{Zikai:} #1}}
\newcommand{\nd}[1]{\textcolor{cyan}{\textbf{Nic:} #1}}
\title{FairWASP: Fast and Optimal Fair Wasserstein Pre-processing}
\author {
    Zikai Xiong\textsuperscript{\rm 1},
    Niccol\`o Dalmasso\textsuperscript{\rm 2,}\footnote{Corresponding Author},
    Alan Mishler\textsuperscript{\rm 2}, \\
    Vamsi K. Potluru\textsuperscript{\rm 2},
    Tucker Balch\textsuperscript{\rm 2},
    Manuela Veloso\textsuperscript{\rm 2}
}
\begin{document}

\maketitle

\begin{abstract}

Recent years have seen a surge of machine learning approaches aimed at reducing disparities in model outputs across different subgroups. In many settings, training data may be used in multiple downstream applications by different users, which means it may be most effective to intervene on the training data itself. In this work, we present FairWASP, a novel pre-processing approach designed to reduce disparities in classification datasets without modifying the original data. FairWASP returns sample-level weights such that the reweighted dataset minimizes the Wasserstein distance to the original dataset while satisfying (an empirical version of) demographic parity, a popular fairness criterion. We show theoretically that integer weights are optimal, which means our method can be equivalently understood as duplicating or eliminating samples. FairWASP can therefore be used to construct datasets which can be fed into any classification method, not just methods which accept sample weights.
Our work is based on reformulating the pre-processing task as a large-scale mixed-integer program (MIP), for which we propose a highly efficient algorithm based on the cutting plane method. Experiments demonstrate that our proposed optimization algorithm significantly outperforms state-of-the-art commercial solvers in solving both the MIP and its linear program relaxation. Further experiments highlight the competitive performance of FairWASP in reducing disparities while preserving accuracy in downstream classification settings.

\end{abstract}

\section{Introduction}



Machine learning is increasingly involved in decision making that impacts people's lives \cite{sloane2022SiliconValleyLove, zhang2022ShiftingMachineLearning}. There is concern that models may inherit from the data bias against subgroups defined by race, gender, or other protected characteristics. Accordingly, there is a vast literature on methods to make machine learning models ``fair.’’ While there is no consensus about what it means for a model to be fair or unfair in a given setting, these methods commonly aim to minimize disparities in model outputs or model performance across different subgroups.

Fair machine learning methods are traditionally divided into three categories: (i) \emph{pre-processing} methods intervene on the training data, (ii) \emph{in-processing} methods apply constraints or regularizers during the model training process itself, and (iii) \emph{post-processing} methods alter the outputs of previously trained models. See \citet{hort2022BiasMitigationMachinea} for a recent review of methods across all three categories.

Among these three, no one category of methods clearly dominates the others in terms of performance.
Pre-processing methods are useful when the person who generates or maintains a dataset is not the same as the person who will be using it to train a model \cite{feldman2015certifying}, or when a dataset may be used to train multiple models. These methods typically require no knowledge of downstream models, so they are in principle compatible with any subsequent machine learning procedure.

Many pre-processing methods operate by changing the feature values or labels of the training data \cite{calders2009BuildingClassifiersIndependency, zliobaite2011HandlingConditionalDiscrimination}, subsampling or oversampling the data \cite{kamiran2010ClassificationNoDiscrimination, yan2020FairClassBalancing, chakraborty2021BiasMachineLearning, salazar2021FAWOSFairnessAwareOversampling}, and/or generating synthetic data \cite{xu2018FairGANFairnessawareGenerativea, salimi2019InterventionalFairnessCausal}. In high-stakes settings such as finance and healthcare, however, it may be unethical or even illegal to alter customer or patient attributes or labels, e.g. with current data regulations in the European Union (GDPR) or health information (HIPAA) in the United States.
Maintaining separate, modified versions of datasets is possible but may be costly for large datasets. An alternative is to learn a set of sample weights that can be passed to a learning method at training time \cite{calders2009BuildingClassifiersIndependency, chai2022FairnessAdaptiveWeights, jiang2020IdentifyingCorrectingLabel, li2022AchievingFairnessNo}. While there are many methods that do this, they focus on satisfying fairness constraints without providing guarantees about how much they alter the distribution of the data.

\subsubsection{Contributions} We present FairWASP, a novel pre-processing method that learns a set of sample weights for classification datasets without modifying the training data. FairWASP minimizes the Wasserstein distance between the original and reweighted datasets while ensuring that the reweighted dataset satisfies (an empirical version of) demographic parity, a popular fairness criterion, which we detail in Section~\ref{sec:FairWASP}. Our contributions are as follows:

\begin{enumerate}
    \item Since directly solving the target optimization problem is computationally infeasible, we provide a three-step reformulation that leads to a tractable linear program in Section \ref{sec: reformulation}. We prove that the solution to this linear program is a solution to the original problem under a mild assumption and show theoretically that, over the set of real-valued weights, integer-valued weights are in fact optimal. This means that FairWASP can be understood equivalently as indicating which samples (rows) of the dataset should be duplicated or deleted at training time, so it is compatible with any downstream classification algorithm, not just algorithms that accept sample weights.
    \item We contribute a highly efficient algorithm to solve the reformulated linear program (Section~\ref{sec:cuttin_plane}), that vastly outperforms state-of-the-art commercial solvers.
    \item We extend FairWASP to satisfy a separate but equivalent definition of demographic parity (Section~\ref{sec: fairwasp-pw}) by leveraging the linear program reformulation above.
    \item We empirically show that FairWASP achieves competitive performance in reducing disparities while preserving accuracy in downstream classification settings when compared to existing pre-processing methods (Section~\ref{sec: experiments}).
\end{enumerate}

See the Supplementary Materials for complete proofs of theoretical claims, more discussion, and details on our algorithm and experiments results.

\section{Background}\label{sec: background}

\subsubsection{Setup }
Consider a dataset of $n$ i.i.d. samples $\left\{Z_i = \left(D_i, X_i, Y_i\right)\right\}_{i=1}^n$ drawn from a joint distribution $p_{Z} = p_{D, X, Y}$ with domain $\mathcal{Z} = \mathcal{D} \times \mathcal{X} \times \mathcal{Y}$. In this context, $D$ represents one or more protected variables such as gender or race, $X$ indicates additional features used for decision-making, and $Y$ is the decision outcome. For example, $Y_i$ could represent a loan approval decision for individual $i$, based on demographic data $D_i$ and credit score $X_i$. Learning tasks typically aim at learning the conditional distribution $P(Y|X)$ or $P(Y|X,D)$ from the samples $\left\{Z_i\right\}_{i=1}^n$.
In this paper, we assume that the number of demographic classes $|\calD|$ and the number of outcome levels $|\calY|$ are significantly smaller than $n$. 


\subsubsection{Demographic Parity (DP)}
Demographic parity (DP), also known as statistical parity, requires an outcome variable to be statistically independent of a sensitive feature \cite{dwork2012FairnessAwareness}. This could mean, for example, that an algorithm used to screen resumes for interviews is required to recommend equal proportions of female and male applicants. DP is arguably the most widely studied fairness criterion to date \cite{hort2022BiasMitigationMachinea}. Violations of DP may be measured in different ways. For FairWASP, we adopt a measure similar to \citet{dwork2012FairnessAwareness} and \citet{calmon2017optimized}, namely the distances between the marginal distribution of an outcome variable and the distributions of that outcome variable conditional on levels of a sensitive feature. Additionally, we show in Section~\ref{sec: fairwasp-pw} that measuring DP as the distance between outcome distributions for each level of the sensitive feature \cite{calmon2017optimized} can also be reformulated in a similar way as the FairWASP optimization problem.

\subsubsection{Pre-processing via Reweighting}
\citet{calders2009BuildingClassifiersIndependency} proposed utilizing a set of sample weights based on the sensitive feature and the outcome variable to target DP. Since then, a variety of papers have utilized similar reweighting approaches \cite{kamiran2010ClassificationNoDiscrimination, jiang2020IdentifyingCorrectingLabel, chai2022FairnessAdaptiveWeights,  li2022AchievingFairnessNo}. However, previous papers provide no guarantees about how the sample weights will change the overall distribution of the data. If the weights alter the distribution of the data significantly, the downstream model might not learn the correct conditional distribution between target variables and features, i.e., $P(Y|X)$ or $P(Y|X, D)$. While minimizing data perturbation has been considered in pre-processing papers which seek to learn transformations of the data itself \cite{zemel2013learning, calmon2017optimized}, to our knowledge, FairWASP is the first reweighting approach that seeks to minimize the overall distributional distance from the original data.

\subsubsection{Wasserstein Distance} 
The general Wasserstein distance (or optimal transport metric) between two probability distributions $(\mu, \nu)$ supported on a metric space $\mathcal{X}$ is defined as the optimal objective of the (possibly infinite-dimensional) linear program (LP):
\begin{equation}\label{pro original wasserstrein distance}
	\mathcal{W}_c(\mu, \nu) \stackrel{\text{def.}}{=} \min_{\pi \in \Pi(\mu, \nu)} \int_{\mathcal{X} \times \mathcal{X}} c(x, y) \mathrm{d} \pi(x, y),
\end{equation}
where $\Pi(\mu, \nu)$ is the set of couplings composed of  joint probability distributions over the product space $\mathcal{X} \times \mathcal{X}$ with marginals $(\mu, \nu)$.
Equation \eqref{pro original wasserstrein distance} is also called the Kantorovitch formulation of optimal transport \cite{kantorovitch1958translocation}. Here, $c(x, y)$ represents the ``cost'' to move a unit of mass from $x$ to $y$. A typical choice in space $\calX$ with metric $d_{\calX}$ is $c(x, y)=d_{\mathcal{X}}(x, y)^p$ for $p \ge 1$, and then $\mathcal{W}_c^{1 / p}$ is referred to as the $p$-Wasserstein distance between probability measures. 
%
%
%
Using the Wasserstein distance between distributions is particularly useful as it provides a bound for functions applied to samples from those distributions. In other words, define the following deviation:
$$
d(\mu,  \nu)\defeq \sup_{f\in\calF} \left|\mathbb{E}_{z\sim \mu} f(z)- \mathbb{E}_{z\sim \nu} f(z) \right| \ ,
$$
where $\calF$ is a family of functions $f$. If $\calF = \textit{Lip}_1$, the class of Lipschitz-continuous functions with Lipschitz constant of $1$, then the deviation $d(\mu, \nu)$ is equal to the 1-Wasserstein distance \cite{santambrogio2015optimal,villani2009optimal}. Analogous bounds can be derived for the 2-Wasserstein distance  when $\calF = \left\{f \mid\|f\|_{\mathcal{S}^1(\mu)} \leq 1\right\}$, the class of functions with unitary norm over the Sobolev space $\mathcal{S} = \left\{f \in L^2 \mid \partial_{x_i} f \in L^2\right\}$ \cite{claici2018wasserstein}. This fact provides a theoretical intuition for downstream utility preservation, i.e., the closer two distributions are in Wasserstein distance, the more similar the downstream performance of learning models trained on such distributions is expected to be. Finally, the Wasserstein distance has been used to express fairness constraints in several in-processing methods \cite{chzhen2020FairRegressionWasserstein, chzhen2022minimax}. To our knowledge, however, it has not previously been used in a pre-processing setting.

\section{FairWASP Optimization Problem}\label{sec:FairWASP}

In this section we propose FairWASP, which casts dataset pre-processing as an optimization problem that aims at minimizing the distance to the original data distribution while satisfying fairness constraints.


Given a dataset $Z = \left\{\left(D_i, X_i, Y_i\right)\right\}_{i=1}^n$, we can write the reweighted distribution of the dataset as:
$$
p_{Z;\theta} \defeq \frac{1}{n} \sum_{i=1}^n \theta_i \delta_{Z_i},
$$
with $\{\theta_i\}_{i\in[n]}$ such that $\sum_i \theta_i = n$, and Dirac measures $\delta_{Z_i}$ centered on $Z_i$. Here $[n] \defeq \{1,2,\dots,n\}$. Note that the empirical distribution of the original dataset can be written in the form above by setting $\theta_i = e_i = 1$ for any $i$, i.e., $p_{Z;e} = \frac{1}{n} \sum_{i=1}^n e_i \delta_{Z_i}$. We will use $e$ to represent the $n$-vector with all entries being $1$. We use the Wasserstein distance between $p_{Z;\theta}$ and $p_{Z;e}$ to measure the discrepancy between the original and reweighted datasets.
%
%
To control for discrimination, we adopt the fairness constraints proposed by \citet{calmon2017optimized}, which are equivalent to imposing demographic parity over the original dataset. In our formulation, this translates to requiring the conditional distribution for all possible values of $D$ under the weights $\{\theta_i\}_{i\in[n]}$ to closely align with the marginal distribution over $Y$ in the original dataset, which we denote $p_{Y}$,
\begin{equation}\label{type1}
	J\left(p_{Z;\theta}(Y = y | D = d), p_{Y}(y)\right) \leq \epsilon, \ \forall \ d \in \mathcal{D}, y \in\calY
\end{equation}
where $J(\cdot, \cdot)$ denotes a distance function between scalars. We will use the shorthand $p_{Z;\theta}(y | d)$ for $p_{Z;\theta}(Y = y | D = d)$. This definition corresponds to the enforcing demographic parity by constraining the selection rates across groups $D=d$ to be equal to the overall selection rate.
However, unlike \citet{calmon2017optimized} who defined $J(p, q) $ as $ |\frac{p}{q}-1|$, we define $J$ as the subsequent symmetric probability ratio measure:
\begin{equation}\label{eq def of J(p,q)}
\begin{array}{c}
	J(p, q)=\max\left\{\frac{p}{q}-1,\frac{q}{p}-1\right\} \ .
\end{array}
\end{equation}
We believe our definition is more practical and theoretically sound because it is symmetric with respect to $p$ and $q$. We note that the two definitions are equivalent when $p > q$ and similar when $p$ is not much smaller than $q$.

Our proposed approach FairWASP finds \textit{integer} weights $\{\theta_i\}_{i\in[n]}$ via solving the following optimization problem:
\begin{equation}\label{pro general optimization framework}
    \begin{aligned}
    &\min_{\theta \in \calI^n \cap \Delta_n} \ \mathcal{W}_c(p_{Z;\theta},  p_{Z;e}) \\
    &\ \ \ \
     \ \ \text{s.t. } \ \ \ \ \ J\left(p_{Z;\theta}(y | d), p_{Y}(y)\right) \leq \epsilon, \ \forall \ d \in \mathcal{D}, y \in\calY,
    \end{aligned}
\end{equation}
where $\calI^n$ is the set of integer vectors in $\mathbb{R}^n$, and $\Delta_n$ is the set of valid weights $\{\theta \in \mathbb{R}^n_+: \sum_{i=1}^n \theta_i = n\}$. The use of integer weights can be understood simply as duplicating or eliminating samples in the original datasets.
This is in contrast with other approaches such as \citet{kamiran2012data} and \citet{bachem2017practical}, in which the sample-level weights are \textit{real-valued}. 
The problem of solving the optimal real-valued weights is instead as follows:
\begin{equation}\label{pro general optimization framework LP}
    \begin{aligned}
    & \min_{\theta \in\Delta_n} \ \mathcal{W}_c(p_{Z;\theta},  p_{Z;e}) \\
    & \ \ \ \text{s.t. } \ \ J\left(p_{Z;\theta}(y | d), p_{Y}(y)\right) \leq \epsilon, \ \forall \ d \in \mathcal{D}, y \in\calY.
    \end{aligned}
\end{equation}
Note that \eqref{pro general optimization framework LP} is in fact an LP relaxation of \eqref{pro general optimization framework}. 
In practice, using real-valued weights requires either (i) resampling each sample proportionally to its weight, which introduces statistical noise in the reweighted distribution, or (ii) including sample weights in the loss function during the learning process. Using integer weights, however, ensures the constructed dataset has exactly the optimal reweighted distribution (in the sense of \eqref{pro general optimization framework} and \eqref{pro general optimization framework LP}), and the reweighted dataset can be fed into any classification method, not just methods which accept sample weights. In addition, Theorem~\ref{thm integer constraints} and Lemma~\ref{lm optimal MIP is optimal LP} show that using integer weights achieves the optimal value of the objective in the optimization problem for real-valued weights, i.e., the optimal solution of \eqref{pro general optimization framework} is also an optimal solution for \eqref{pro general optimization framework LP}.

\section{Reformulations of the Optimization Problem} \label{sec: reformulation}

In this section, we provide a computationally tractable equivalent formulation of \eqref{pro general optimization framework}. In Step 1, we  reformulate \eqref{pro general optimization framework} as a mixed-integer program (MIP). However, directly solving this problem is infeasible due to its scale. In Step 2, we demonstrate that, through specific reformulations, the dual of the LP relaxation becomes more computationally manageable. In Step 3, we prove that the solution of the dual problem can lead to an optimal solution of \eqref{pro general optimization framework}.

\subsection{Step 1: Reformulating \eqref{pro general optimization framework} as a MIP}

First, we show that the constraint~\eqref{type1}  can be reformulated as linear constraints on $\theta$ of the form $A\theta \ge \mathbf{0}$. The conditional probability in constraint~\eqref{type1} can be rewritten as
$$
\begin{array}{c}p_{Z;\theta}(y | d) = \frac{\sum_{i\in[n]:d_i = d,y_i=y}\theta_i}{\sum_{i\in[n]:d_i = d}\theta_i}.
\end{array}$$

By substituting the definition of the distance $J(\cdot,\cdot)$ from \eqref{eq def of J(p,q)}, the fairness constraints equivalently become linear constraints on $\{\theta_i\}_{i=1}^n$ (via inverting a fractional linear transformation), taking the following form for all $d \in \mathcal{D}, y \in\calY$:
\begin{equation}\label{eq linear fairness constraints}
	\begin{array}{c}
        {\sum_{i\in[n]:d_i = d,y_i=y}\theta_i} 
		\le (1+\epsilon)\cdot  p_{Y}(y) \cdot
		{\sum_{i\in[n]:d_i = d}\theta_i} \ ,  \\
	{\sum_{i\in[n]:d_i = d,y_i=y}\theta_i} 
		\ge \frac{1}{1 + \epsilon}\cdot p_{Y}(y) \cdot
		{\sum_{i\in[n]:d_i = d}\theta_i} \ .
	\end{array}
\end{equation}
In total, \eqref{eq linear fairness constraints} defines $2|\calY||\calD|$ linear constraints on $\theta$ in the format of $A\theta \ge \mathbf{0}$, where $A$ is a  $2|\calY||\calD|$-row matrix\footnote{Note that when $Y$ is binary, e.g., $\calY = \{0, 1\}$, half of the linear constraints induced by \eqref{type1} are redundant and can be removed.}. 

Regarding the objective, the Wasserstein distance can be equivalently formulated as a linear program with $n^2$ variables \cite{peyre2019computational}.
Let $C \in \mathbb{R}^{n\times n}$ represent the matrix formed by the transportation costs, i.e., $C_{ij} = c(z_i,z_j)$. Then, according to definition \eqref{pro original wasserstrein distance}, the objective function $\calW_c(p_{Z;\theta}, p_{Z; e})$ is given by the optimal objective of the following problem:
\begin{equation}\label{pro wasserstrein objective}
\min_{P\in \mathbb{R}^{n\times n}} \langle C , P\rangle \ \text{ s.t. } P e  = e, \ P^\top e  = \theta, P \ge \mathbf{0}_{n\times n} \,
\end{equation}
where $\langle \cdot, \cdot \rangle$ is the Frobenius inner product and recall that $e=1$ is the vector of ones.
Hence, the integer-weight optimization problem in \eqref{pro general optimization framework} is equivalent to the following MIP:
\begin{equation}\label{MIP}
	\begin{aligned}
		\min_{\theta \in \mathbb{R}^n, P\in \mathbb{R}^{n\times n}} \ &  \langle C , P\rangle  
		\\
		\text{ s.t.} \ \ \ \ \ \ \ \  &  P e  = e, \ P^\top e  = \theta, P \ge \mathbf{0}_{n\times n} \\
		& \theta \in \calI^n \cap \Delta_n,\ A\theta \ge \mathbf{0} \ 
	\end{aligned}
\end{equation}
Similarly, the real-valued weights problem in \eqref{pro general optimization framework LP} is equivalent to the following LP:
\begin{equation}\label{LP}
	\begin{aligned}
		\min_{\theta \in \mathbb{R}^n, P\in \mathbb{R}^{n\times n}} \ &  \langle C , P\rangle  
		\\
		\text{ s.t.} \ \ \ \ \ \ \ \  &  P e  = e, \ P^\top e  = \theta, P \ge \mathbf{0}_{n\times n} \\
		& \theta \in \Delta_n,\ A\theta \ge \mathbf{0} \ 
	\end{aligned}
\end{equation}
Note that \eqref{LP} is actually also the LP relaxation of \eqref{MIP}. However, this reformulation is not yet practically useful as problem \eqref{LP} involves $O(n^2)$ variables, which poses a challenge for both conventional LP algorithms and state-of-the-art MIP methods, such as the LP based branch-and-bound methods \cite{gurobi}.



\subsection{Step 2: Dual Problem of the LP Relaxation}

In this step, we propose a solution of the LP relaxation \eqref{LP} by considering its dual problem. 

First, note that some constraints are currently redundant. For any feasible $(\theta,P)$, $\theta$ already lies in $\Delta_n$; given a feasible $P$, we have (i) $\theta = P^\top e$, (ii) $e^\top e  = n$ and (iii) $Pe  = e$, so it follows that $\theta^\top e  = e ^\top P e  = e^\top e  = n$.
Consequently, we can replace $\theta$ with $Pe$ and reformulate \eqref{LP} equivalently as:
\begin{equation}\tag{P}\label{pro general optimization model 2}
	\begin{aligned}
		\min_{P\in\mathbb{R}^{n\times n}}  \langle C , P\rangle  \
		\text{ s.t.} \  P  e  = e, \  P \ge \mathbf{0}_{n\times n}, \  A P^\top  e  \ge \mathbf{0} \ . 
	\end{aligned}
\end{equation}
Therefore, the optimal $\theta^\star$ of \eqref{LP} can be reconstructed from the optimal $P^\star$ of \eqref{pro general optimization model 2} using $\theta^\star = (P^\star)^\top  e $.

Second, we use a property of LP problems to reformulate \eqref{pro general optimization model 2}. When the feasible set of the LP problem \eqref{pro general optimization model 2} is nonempty and the optimal solution $P^\star$ exists, $P^\star$ is part of a saddle point of the saddle-point problem on the Lagrangian, 
\begin{equation}\tag{PD}\label{pro saddle point problem}
	\begin{aligned}
		\min_{P \in S_n} \max_{\lambda \in\mathbb{R}^m_+}\ &  L(P,\lambda) \defeq \langle C , P\rangle   - \lambda^\top  A P^\top  e  
	\end{aligned}
\end{equation}
where $S_n \defeq \{P\in\mathbb{R}^{n\times n}:P  e  = e, \  P \ge \mathbf{0}_{n\times n}\}$. 
Since $L(\cdot,\cdot)$ is bilinear, the minimax theorem \cite{du1995minimax} guarantees that \eqref{pro saddle point problem} is equivalent to $\max_{\lambda \in\mathbb{R}^m_+} 	\min_{P \in S_n}\   L(P,\lambda)$. This is then equal to the dual:
\begin{equation}\tag{D}\label{pro dual problem}
	\begin{aligned}
  \max_{\lambda \ge 0} -F(\lambda)  , \text{ where }F(\lambda)\defeq \max_{P \in S_n}   \Big\langle \bar{C} , P\Big\rangle  \,
	\end{aligned}
\end{equation}
where $\bar{C} = \sum_{j=1}^m \lambda_j  e a_j^\top - C$ and $a_j^\top$ is the $j$-th row of $A$. 

Unlike the problem in \eqref{LP}, the dual problem~\eqref{pro dual problem} can be directly solved, as we show in Lemma~\ref{lm first order oracles} below.

\begin{lemma}\label{lm first order oracles}
    For function $G(\bar{C})\defeq \max_{P \in S_n}   \langle \bar{C} , P\rangle $, it is a convex function of $\bar{C}$ in $\mathbb{R}^{n\times n}$. It has the following function value and subgradient.  For each $i\in [n]$, let $\bar{c}_{ij_i^\star}$             denote a largest component on the $i$-th row of $\bar{C}$, then $G(\bar{C}) = \sum_{i=1}^n  c_{ij_i^\star}$. Define the components of of $P^\star \in \mathbb{R}^{n\times n}$ as
		\begin{equation}\label{eq compute optimal P}
            \begin{array}{c}
		p_{ij} = \left\{
		\begin{array}{cc}
			0 \ ,& \text{ if }  j \neq j_i^\star \\
			1 \ , & \text{ if } j = j_i^\star
		\end{array}
		\right.
            \end{array}
		\end{equation}
		and then $P^\star \in \arg\max_{P \in S_n}   \langle \bar{C} , P\rangle $ and $P^\star \in   \partial G(\bar{C}) $.
\end{lemma}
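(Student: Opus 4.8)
The plan is to exploit the fact that the constraints defining $S_n$ couple the entries of $P$ only within each row, so the maximization in $G(\bar C)=\max_{P\in S_n}\langle \bar C,P\rangle$ separates into $n$ independent row problems. First I would expand $\langle \bar C, P\rangle = \sum_{i=1}^n \sum_{j=1}^n \bar c_{ij}\, p_{ij}$ and observe that the constraints $Pe=e$ and $P\ge \mathbf{0}_{n\times n}$ say precisely that each row $(p_{i1},\dots,p_{in})$ lies in the probability simplex $\{w\in\mathbb{R}^n_+ : \sum_j w_j = 1\}$, independently across $i$. Hence $G(\bar C)=\sum_{i=1}^n \sigma_i$, where $\sigma_i=\max\{\sum_j \bar c_{ij} w_j : w\ge 0,\ \sum_j w_j=1\}$ is the maximum of a linear functional over the simplex. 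Such a maximum is attained at a vertex, i.e.\ by placing all mass on a coordinate $j_i^\star$ achieving the row maximum, so $\sigma_i=\max_j \bar c_{ij}=\bar c_{ij_i^\star}$. This yields both the claimed value $G(\bar C)=\sum_{i=1}^n \bar c_{ij_i^\star}$ and the fact that the matrix $P^\star$ defined in \eqref{eq compute optimal P} is a maximizer, so $P^\star\in\arg\max_{P\in S_n}\langle\bar C,P\rangle$.

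For convexity, I would note that for each fixed feasible $P$ the map $\bar C\mapsto\langle \bar C,P\rangle$ is linear in $\bar C$, and $G$ is the pointwise supremum of this family over $P\in S_n$; a pointwise supremum of affine functions is convex, so $G$ is convex on $\mathbb{R}^{n\times n}$. Finiteness everywhere follows because $S_n$ is a nonempty, compact set of row-stochastic matrices, so the maximum is always attained.

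The remaining claim $P^\star\in\partial G(\bar C)$ is the only step requiring a short argument, and I would establish it directly from the subgradient inequality. Since $P^\star$ is optimal for $\bar C$ we have $G(\bar C)=\langle\bar C,P^\star\rangle$, while for any other $\bar C'$ the feasibility of $P^\star$ gives $G(\bar C')=\max_{P\in S_n}\langle\bar C',P\rangle\ge\langle\bar C',P^\star\rangle$. Subtracting, $G(\bar C')-G(\bar C)\ge\langle\bar C',P^\star\rangle-\langle\bar C,P^\star\rangle=\langle P^\star,\bar C'-\bar C\rangle$ for all $\bar C'$, which is exactly the defining inequality for $P^\star\in\partial G(\bar C)$. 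Equivalently, $G$ is the support function of $S_n$, whose subdifferential at $\bar C$ is the set of its maximizers.

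I do not anticipate a genuine obstacle here: the only places demanding care are keeping the support-function/subgradient step precise and correctly handling ties in the row maxima. When a row maximum is achieved at several coordinates, any choice of $j_i^\star$ gives a valid optimal $P^\star$ and hence a valid subgradient, consistent with $\partial G(\bar C)$ being set-valued at such points.
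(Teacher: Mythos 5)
Your proposal is correct and follows essentially the same route as the paper's proof: decompose the maximization over $S_n$ into $n$ independent linear programs over probability simplexes with vertex maximizers, and obtain the subgradient property from the feasibility of $P^\star$ for any other cost matrix via the support-function inequality. The only cosmetic difference is that you justify convexity directly as a pointwise supremum of linear functions, whereas the paper cites the standard LP result (Theorem 5.3 of Bertsimas and Tsitsiklis), which rests on the same fact.
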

\begin{proof}[Proof Sketch]
The proof directly uses the convexity of the maximium LP's optimal objective on the cost function. The problem can be divided into independent separate smaller LP on simplexes,  each having a closed-form maximizer.
\end{proof}

Due to the chain rule, Lemma \ref{lm first order oracles} shows that $F(\lambda)$ is convex and the function values and subgradients of $F(\lambda) = G( \sum_{j=1}^m \lambda_j  e a_j^\top- C)$ can be computed as well. This implies \eqref{pro dual problem} is equivalent to
\begin{equation}\tag{D-2}\label{eq equivalent dual}
\min_{\lambda \in \mathbb{R}^m_+} \ F(\lambda) \ ,
\end{equation}
whose objective function $F(\cdot)$ is a convex function of $\lambda$ (see Lemma \ref{lm first order oracles}). Here $m$ is the number of rows in matrix $A$, which as shown before is at most $2|\calY||\calD| \ll n$. Reformulation \eqref{eq equivalent dual} is important as it makes it possible to use methods that need only subgradients of the dual problem \eqref{pro dual problem}, such as the subgradient descent method and the cutting plane method \cite{nesterov2018lectures}, as we show below in Section~\ref{sec:cuttin_plane}.

Finally, we consider the implications for the uniqueness of the primal optimal solution $P^\star$.

\begin{assumption}\label{assump unique P star}
    The problem 
        $\min_{P \in S_n}   \langle    C - \sum_{j=1}^m \lambda_j^\star  e a_j^\top , P\rangle$ 
    has a unique minimizer for the optimal solution $\lambda^\star$ for \eqref{pro dual problem}. 
\end{assumption}

\begin{corollary}\label{corollary uniqueness dual}
    Under Assumption~\ref{assump unique P star}, the primal optimal solution $P^\star$ given by Lemma \ref{lm first order oracles} is the unique maximizer of $\max_{P \in S_n}   \langle  \bar{C} , P\rangle$.
\end{corollary}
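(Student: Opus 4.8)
The plan is to recognize that Corollary~\ref{corollary uniqueness dual} is essentially a translation of Assumption~\ref{assump unique P star} into the maximization form used by Lemma~\ref{lm first order oracles}, so the real content is matching the two problem formulations together with a careful sign bookkeeping, after which uniqueness is immediate. First I would evaluate $\bar{C}$ at the optimal dual solution $\lambda = \lambda^\star$, giving $\bar{C} = \sum_{j=1}^m \lambda_j^\star\, e a_j^\top - C$. Then for every $P \in S_n$,
\[
\langle \bar{C}, P\rangle = \Big\langle \sum_{j=1}^m \lambda_j^\star\, e a_j^\top - C,\, P\Big\rangle = -\Big\langle C - \sum_{j=1}^m \lambda_j^\star\, e a_j^\top,\, P\Big\rangle,
\]
so maximizing $\langle \bar{C}, P\rangle$ over $S_n$ is identical, up to the overall sign, to minimizing $\langle C - \sum_{j}\lambda_j^\star\, e a_j^\top,\, P\rangle$ over $S_n$. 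In particular the two optimizing sets coincide:
\[
\arg\max_{P \in S_n} \langle \bar{C}, P\rangle = \arg\min_{P \in S_n} \Big\langle C - \sum_{j=1}^m \lambda_j^\star\, e a_j^\top,\, P\Big\rangle.
\]

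Next I would combine the two available inputs. By Lemma~\ref{lm first order oracles}, the matrix $P^\star$ constructed via \eqref{eq compute optimal P} satisfies $P^\star \in \arg\max_{P \in S_n} \langle \bar{C}, P\rangle$, so it is at least one maximizer. By Assumption~\ref{assump unique P star}, the minimization problem on the right-hand side above has a unique minimizer; hence, via the identity of the two argument-optimizing sets, the maximization problem $\max_{P \in S_n}\langle \bar{C}, P\rangle$ has a unique maximizer as well, i.e.\ its argmax set is a singleton. Since $P^\star$ is a member of this singleton, it must be the unique maximizer, which is exactly the claim.

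The only point requiring care, and the closest thing to an obstacle, is ensuring that the two problems are aligned precisely at the dual optimum $\lambda^\star$ rather than at an arbitrary $\lambda$: the uniqueness asserted in Assumption~\ref{assump unique P star} holds only for $\lambda = \lambda^\star$, and $\bar{C}$ in the statement of the corollary must likewise be read as evaluated there, consistent with its role in \eqref{pro dual problem} and \eqref{eq equivalent dual}. Once this identification is fixed, no genuine optimization argument remains; everything follows from the elementary observation that a known element of a singleton set equals its sole member. If I wanted to tie $P^\star$ back to the original LP relaxation rather than only to the inner maximization, I would add a closing sentence invoking strong duality and complementary slackness to confirm that this unique maximizer is indeed the primal optimal solution of \eqref{pro general optimization model 2}.
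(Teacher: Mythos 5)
Your proposal is correct and follows essentially the same route as the paper: identify $\bar{C}$ at $\lambda^\star$, observe that $\arg\max_{P\in S_n}\langle\bar{C},P\rangle$ coincides (via the sign flip) with the minimization problem in Assumption~\ref{assump unique P star}, and conclude that this singleton set must contain the $P^\star$ from Lemma~\ref{lm first order oracles}. The paper's one-line proof additionally invokes the saddle-point property to place the primal optimum of \eqref{pro general optimization model 2} in $\arg\min_{P\in S_n}L(P,\lambda^\star)$ — the point you correctly flagged as the optional closing step tying $P^\star$ back to the original LP.
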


\begin{proof}
    Once the optimal $\lambda^\star$ of \eqref{pro dual problem} is computed, using Assumption~\ref{assump unique P star} the optimal solution $P^\star$ of \eqref{pro general optimization model 2} then lies in $\arg\min_{P\in S_n}L(P,\lambda^\star)$, or equivalently $\arg\max_{P \in S_n}   \langle    \sum_{j=1}^m \lambda_j^\star  e a_j^\top  - C, P\rangle$.
\end{proof}

Assumption~\ref{assump unique P star} ensures there are no ties when calculating the row-wise max in the $\bar{C}$ matrix. Ties occur only for $\bar{C}$ in a set of measure zero, as the set of $\bar{C}$ such that  $\max_{P \in S_n}  \langle \bar{C} , P\rangle$ has multiple maximizers is the $\bar{C}$ with a row containing two or more largest components, which is of a strictly smaller dimension than the full space and thus zero measure. In practice, Assumption~\ref{assump unique P star} holds almost always due to rounding errors and the termination tolerance when computing the optimal solution $\lambda^\star$.

\subsection{Step 3: Using the Dual Solution to Solve the Original MIP}\label{ref:dual}

In this section, we show how to recover the optimal $P^\star$ and $\theta^\star$ of \eqref{LP} given the optimal solution $\lambda^\star$ of \eqref{pro dual problem}.
The following theorem demonstrates that the optimal solution $(\theta^\star,P^\star)$ of the LP \eqref{LP} recovered in this manner is also optimal for the MIP \eqref{MIP}.

\begin{theorem}\label{thm integer constraints}
    Let $\lambda^\star$ be an optimal dual solution of \eqref{pro dual problem} and let Assumption \ref{assump unique P star} hold. $P^\star$ is an optimal primal solution obtained through Lemma \ref{lm first order oracles} using the form of \eqref{eq compute optimal P}. Then it holds that $\theta^\star = (P^\star)^\top  e $ and $P^\star$ are  optimal solutions for both the LP \eqref{LP} and the MIP \eqref{MIP}.
\end{theorem}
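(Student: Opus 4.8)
The plan is to exploit the explicit combinatorial structure of the primal matrix $P^\star$ returned by Lemma~\ref{lm first order oracles}, since this is exactly what forces $\theta^\star$ to be integral, and then to close the gap between the LP \eqref{LP} and the MIP \eqref{MIP} by the standard relaxation comparison. First I would read off from the closed form \eqref{eq compute optimal P} that $P^\star$ is a $0$-$1$ \emph{selection matrix}: each row $i$ has a single nonzero entry, equal to $1$, in the column $j_i^\star$ attaining the row-wise maximum of $\bar{C}$. Consequently $\theta^\star = (P^\star)^\top e$ has $j$-th coordinate $\theta^\star_j = \sum_{i=1}^n p_{ij} = |\{i : j_i^\star = j\}|$, a nonnegative integer, so $\theta^\star \in \calI^n$. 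Moreover every row of $P^\star$ sums to $1$ (it lies in $S_n$), whence $\sum_j \theta^\star_j = \sum_i \sum_j p_{ij} = n$, giving $\theta^\star \in \calI^n \cap \Delta_n$.

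Next I would establish that $(\theta^\star, P^\star)$ is optimal for \eqref{LP}. Since $\lambda^\star$ is optimal for the dual \eqref{pro dual problem} and Assumption~\ref{assump unique P star} holds, Corollary~\ref{corollary uniqueness dual} identifies the $P^\star$ of \eqref{eq compute optimal P} as the unique maximizer of $\langle \bar{C}, P\rangle$ over $S_n$, equivalently the unique minimizer of $L(\cdot, \lambda^\star)$. By the saddle-point/minimax equivalence between \eqref{pro saddle point problem} and \eqref{pro dual problem}, this $P^\star$ is precisely the primal optimal solution of \eqref{pro general optimization model 2}; in particular it is feasible, so $A (P^\star)^\top e \ge \mathbf{0}$, and it attains the common optimal value of \eqref{pro general optimization model 2} and \eqref{pro dual problem}. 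Because \eqref{pro general optimization model 2} was obtained from \eqref{LP} by eliminating the redundant variable via $\theta = P^\top e$, the pair $(\theta^\star, P^\star)$ with $\theta^\star = (P^\star)^\top e$ is optimal for \eqref{LP}.

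Finally I would promote this LP optimum to a MIP optimum. Every point feasible for \eqref{MIP} is feasible for its relaxation \eqref{LP}, so the optimal value of \eqref{MIP} is at least that of \eqref{LP}. By the previous two steps $(\theta^\star, P^\star)$ is feasible for \eqref{MIP} --- it satisfies every constraint of \eqref{LP} and in addition $\theta^\star \in \calI^n$ --- while attaining the LP optimal value $\langle C, P^\star\rangle$. Thus a MIP-feasible point achieves the optimal value of \eqref{LP}, which is a lower bound on the optimal value of \eqref{MIP}; the two values therefore coincide and $(\theta^\star, P^\star)$ is optimal for \eqref{MIP} as well.

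The crux of the argument is the integrality observation of the first step: the theorem rests on the fact that Lemma~\ref{lm first order oracles} returns a vertex of $S_n$ of selection-matrix form rather than a fractional optimizer. This is precisely where Assumption~\ref{assump unique P star} is indispensable, because without uniqueness the set $\arg\max_{P \in S_n} \langle \bar{C}, P\rangle$ could contain fractional matrices (convex combinations of selection matrices) and the recovered $\theta^\star$ would no longer be guaranteed integral. Once this structural fact is secured, both the LP optimality (via Corollary~\ref{corollary uniqueness dual}) and the relaxation sandwich for the MIP are routine.
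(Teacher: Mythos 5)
Your proof is correct and follows essentially the same route as the paper's own proof: integrality of $\theta^\star$ from the $0$-$1$ form of \eqref{eq compute optimal P}, LP optimality of $(\theta^\star,P^\star)$ via Assumption~\ref{assump unique P star} and the dual/saddle-point argument, and the standard relaxation sandwich to conclude MIP optimality. The only slip is in your closing remark: Assumption~\ref{assump unique P star} is not what guarantees integrality --- the construction \eqref{eq compute optimal P} always returns a $0$-$1$ vertex, ties or not --- rather it guarantees that this vertex is actually the primal optimum of \eqref{pro general optimization model 2}; without uniqueness, the vertex you construct from $\lambda^\star$ need not be optimal, or even feasible, for \eqref{pro general optimization model 2}, while the true optimum could be fractional.
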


\begin{proof}[Proof Sketch]
The proof uses the fact that problems~\eqref{LP} and \eqref{MIP} have the same objective function while the feasible set of \eqref{MIP} is smaller than that of \eqref{LP}, so if an optimal solution of \eqref{LP} is also feasible for \eqref{MIP}, then it is optimal for \eqref{MIP} as well.
\end{proof}

Theorem \ref{thm integer constraints} shows that once \eqref{LP} is solved by the dual problem \eqref{pro dual problem}, then \eqref{MIP} can be solved immediately. Finally, we can then also conclude that the solutions found by FairWASP are optimal even among real-valued weights.

\begin{lemma}\label{lm optimal MIP is optimal LP}
    When Assumption \ref{assump unique P star} holds, the optimal integer-weight solution of \eqref{pro general optimization framework} is as good as the optimal real-valued-weight solution of \eqref{pro general optimization framework LP}. 
\end{lemma}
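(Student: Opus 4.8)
The plan is to derive the lemma as an immediate consequence of Theorem~\ref{thm integer constraints} together with the equivalences established in Step~1. First I would recall that, by the reformulation in Step~1, the integer-weight problem \eqref{pro general optimization framework} has exactly the same optimal value as the MIP \eqref{MIP}, and the real-valued-weight problem \eqref{pro general optimization framework LP} has exactly the same optimal value as the LP \eqref{LP}. This holds because the Wasserstein objective $\mathcal{W}_c(p_{Z;\theta},p_{Z;e})$ is, by \eqref{pro wasserstrein objective}, precisely the optimal value of the inner transport LP in $P$, so introducing $P$ as an auxiliary variable (with $\theta=P^\top e$) changes neither the set of admissible $\theta$ nor the attained objective.

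Next I would invoke the standard relaxation inequality. Since \eqref{LP} is obtained from \eqref{MIP} by dropping the integrality requirement $\theta\in\calI^n$, every feasible point of \eqref{MIP} is feasible for \eqref{LP} with the same objective, so the optimal value of \eqref{LP} is at most that of \eqref{MIP}; equivalently, the best integer-weight objective is never smaller than the best real-valued-weight objective. The reverse inequality is exactly the content of Theorem~\ref{thm integer constraints}: under Assumption~\ref{assump unique P star}, the pair $(\theta^\star,P^\star)$ reconstructed from $\lambda^\star$ is simultaneously optimal for \eqref{LP} and feasible—hence optimal—for \eqref{MIP}, which forces the two optimal values to coincide. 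Chaining the equivalences then yields $\opt\eqref{pro general optimization framework}=\opt\eqref{MIP}=\opt\eqref{LP}=\opt\eqref{pro general optimization framework LP}$, so the best integer-weight objective equals the best real-valued-weight objective and is therefore ``as good as'' it.

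The point to be careful about—rather than a genuine obstacle—is verifying that Theorem~\ref{thm integer constraints} truly delivers an \emph{integer-feasible} optimizer, since only then is the real-valued optimum actually attained inside $\calI^n\cap\Delta_n$. This is immediate from the construction: the $P^\star$ produced by Lemma~\ref{lm first order oracles} via \eqref{eq compute optimal P} has $0/1$ entries with exactly one $1$ per row, so $\theta^\star=(P^\star)^\top e$ is a nonnegative integer vector summing to $n$, i.e.\ $\theta^\star\in\calI^n\cap\Delta_n$. I would note that Assumption~\ref{assump unique P star} enters only to guarantee this $\theta^\star$ is well defined and optimal (no ties in the row-wise maxima of $\bar C$), exactly as in Corollary~\ref{corollary uniqueness dual} and Theorem~\ref{thm integer constraints}; no further reasoning about the structure of the transportation polytope is required, because the integrality here comes ``for free'' from the closed-form maximizer of Lemma~\ref{lm first order oracles} rather than from a total-unimodularity argument.
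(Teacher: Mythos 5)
Your proposal is correct and follows essentially the same route as the paper: the paper treats this lemma as an immediate consequence of Theorem~\ref{thm integer constraints}, since that theorem exhibits a single pair $(\theta^\star,P^\star)$ that is optimal for both the LP \eqref{LP} and the MIP \eqref{MIP}, forcing their optimal values (and hence those of \eqref{pro general optimization framework} and \eqref{pro general optimization framework LP}) to coincide. Your write-up simply makes explicit the relaxation inequality and the Step~1 equivalences that the paper leaves implicit, and your observation that integrality of $\theta^\star$ comes for free from the $0/1$ structure of \eqref{eq compute optimal P} matches the paper's own argument in the proof of Theorem~\ref{thm integer constraints}.
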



\section{Cutting Plane Method for the Reformulated Problem}\label{sec:cuttin_plane}


The cutting plane method \cite{Khachiyan1980polynomial} is a class of methods for convex problems in settings where the \textit{separation oracle} is accessible. For any $\lambda \in \mathbb{R}^m$ in problem \eqref{eq equivalent dual}, 
a separation oracle is an operator  that returns a vector $g$ such that $g^\top \lambda \ge g^\top \lambda^\star$  for any $\lambda^\star \in \Lambda^\star$, where $ \Lambda^\star$ denotes the set of optimal solutions. The cutting plane method iteratively makes use of the separation oracle to restrict the feasible sets until convergence\footnote{In our case, convergence is achieved when the gap between the primal and dual problems is lower than a given tolerance.}. Algorithm~\ref{alg:cutting plane method} shows a pseudo-code breakdown of the cutting plane algorithm; variants of the cutting plane methods differ in the implementation of lines \ref{line:choose interior point} and \ref{line:nextE} (see \citealt{nesterov2018lectures} for more details).


%
\begin{algorithm}[t]
\caption{General Cutting Plane Method for \eqref{eq equivalent dual}}
\label{alg:cutting plane method}
\begin{algorithmic}[1]
\STATE Choose a bounded set $E_0$ containing an optimal solution
\FOR{$k$ from $0$ to $n$}
    \STATE Choose $\lambda^k$ from $E_k$ \label{line:choose interior point}
    \STATE Compute $g \in \mathbb{R}^m$ such that  \label{line:seperation oracle}
    $$
    g^\top \lambda^k \ge g^\top \lambda^\star \text{ for any }\lambda^\star \in  \Lambda^\star
    $$
    \STATE Choose $E_{k+1} \supseteq \{\lambda\in E_k: g^\top \lambda \le g^\top \lambda^k\}$ \label{line:nextE}
\ENDFOR
\end{algorithmic}
\end{algorithm}
For the problem~\eqref{eq equivalent dual}, a separation oracle (line 4 in Algorithm \ref{alg:cutting plane method}) can be obtained from the subgradients, which are efficiently accessible according to Lemma \ref{lm first order oracles}. Corollary~\ref{cor our complexity} below provides an analysis of both time and space complexity; see Supplementary Material A for more details on the separation oracle, implementation, and the proof.

\begin{corollary}\label{cor our complexity}
    With efficient computation and space management, the cutting plane method is able to solve the problem \eqref{eq equivalent dual} within 
    $ \tilde{O}
    \left(n^2 + |\calD|^2|\calY|^2 n \cdot \log(R/\eps)
    \right) $
    flops and $ {O}( n|\calD||\calY| ) $ space.\footnote{We use the notation $\tilde{O}(\cdot)$ to hide $m$, $n$, $|\calD|$, and $|\calY|$ in the logarithm function. Here $R$ denotes the maximum norm of the optimal solutions of \eqref{eq equivalent dual}}
\end{corollary}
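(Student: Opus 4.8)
The plan is to combine the standard iteration complexity of cutting plane methods with a structure-exploiting analysis of the cost of a single separation-oracle call, together with a one-time preprocessing pass. Since \eqref{eq equivalent dual} is a convex minimization over $\lambda \in \mathbb{R}^m_+$ with $m \le 2|\calD||\calY|$, a modern cutting plane method reaches accuracy $\eps$ in $\tilde{O}(m \log(R/\eps)) = \tilde{O}(|\calD||\calY|\log(R/\eps))$ oracle calls, where $R$ bounds the norm of the optimal $\lambda^\star$ and the $\tilde{O}$ absorbs logarithmic factors as well as the per-iteration internal bookkeeping of the method, which is polynomial in $m$ and hence dominated because $|\calD||\calY| \ll n$. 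The separation oracle at a query point $\lambda^k$ is supplied by a subgradient of $F$: by Lemma \ref{lm first order oracles} and the chain rule, $\partial F(\lambda) \ni A\theta^\star$ with $\theta^\star = (P^\star)^\top e$, so the whole task reduces to evaluating this subgradient cheaply.

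The crux is an observation about the structure of $\bar{C} = \sum_{j=1}^m \lambda_j e a_j^\top - C$. Each rank-one term $e a_j^\top$ is constant down its columns, so the additive correction to $C$ in column $i'$ equals $v_{i'} \defeq (A^\top \lambda)_{i'}$, and because every row $a_j$ of $A$ depends on sample $i'$ only through its group $(d_{i'}, y_{i'})$, the vector $v$ takes at most $|\calD||\calY|$ distinct values. Consequently the row maximum required by Lemma \ref{lm first order oracles}, namely $\max_{i'}(v_{i'} - C_{i,i'})$, splits into a maximum over the $\le |\calD||\calY|$ groups of $v_{(d,y)} - \min_{i' \in (d,y)} C_{i,i'}$. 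I would therefore precompute, once, for every row $i$ and every group $(d,y)$ the quantity $\min_{i' \in (d,y)} C_{i,i'}$ and its argmin; this single pass costs $O(n^2)$ flops and, crucially, can stream over the entries $C_{i,i'} = c(z_i, z_{i'})$ so that the $n \times n$ matrix is never stored, leaving only an $n \times |\calD||\calY|$ table in memory.

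Given this table, each oracle call proceeds in three cheap stages: form the $|\calD||\calY|$ distinct values of $v$ from $\lambda^k$ in $O(|\calD|\,|\calY|^2)$ flops; compute every row maximum and its argmax $j_i^\star$ in $O(n|\calD||\calY|)$ flops by scanning the precomputed minima; and assemble $\theta^\star = (P^\star)^\top e$ (a histogram of the $j_i^\star$, hence $O(n)$) together with the subgradient $A\theta^\star$, which again collapses to group sums in $O(|\calD|\,|\calY|^2)$. The dominant per-call cost is thus $O(n|\calD||\calY|)$. Multiplying by the $\tilde{O}(|\calD||\calY|\log(R/\eps))$ iterations and adding the $O(n^2)$ preprocessing yields the claimed $\tilde{O}(n^2 + |\calD|^2|\calY|^2 n \log(R/\eps))$ flop count, while the stored objects, i.e. the $n\times|\calD||\calY|$ table, the $m$-dimensional localizer, and the implicit representation of $P^\star$ through $\theta^\star \in \mathbb{R}^n$, occupy $O(n|\calD||\calY|)$ space.

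The main obstacle is precisely this reduction of the per-iteration work from the naive $O(n^2)$ of forming and scanning all of $\bar{C}$ down to $O(n|\calD||\calY|)$; meeting the flop and the space targets simultaneously hinges both on the group-constancy of $v$ and on performing the $O(n^2)$ cost-matrix scan in streaming fashion, so that neither $C$ nor the $n\times n$ iterate $P$ is ever materialized. A secondary point to verify is that the internal linear algebra of the chosen cutting plane variant, being polynomial in $m = O(|\calD||\calY|)$, is dominated by the oracle cost under the standing assumption $|\calD|, |\calY| \ll n$, so that it may safely be hidden inside the $\tilde{O}$.
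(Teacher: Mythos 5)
Your proposal is correct and takes essentially the same route as the paper's own proof: a one-time $O(n^2)$ streaming pass that records, for each row $i$ and each $(d,y)$-group, the minimum $\min_{i'\in(d,y)}C_{i,i'}$ (exploiting that $\sum_{j}\lambda_j e a_j^\top$ is constant within groups), which reduces each subgradient/separation-oracle call to $O(n|\calD||\calY|)$ flops and $O(n|\calD||\calY|)$ space, combined with a cutting plane method (the paper cites Jiang et al.\ 2020, costing $O((m\cdot\operatorname{SO}+m^2)\log(mR/\eps))$ flops with $m\le 2|\calD||\calY|\ll n$) to get the stated bounds. The only detail the paper includes that you omit is the separation oracle at points with a negative coordinate $\lambda_j<0$ (one simply returns the coordinate cut $g=-e_j$), which is immediate.
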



\subsubsection{Comparison with Other LP Algorithms}

Table \ref{tbl LP algorithms} compares theoretical complexities and convergence rates of our cutting plane method implementation with the traditional simplex method and interior point method, as well as a recently proposed practical first-order method \cite{applegate2022faster,applegate2021practical} based on the primal-dual hybrid gradient (PDHG). 
Other LP algorithms, such as \citep{wang2022light}, are only for problems with special structures.
Note that the original LP problem \eqref{LP} has $O(n)$ constraints and $O(n^2)$ nonnegative variables, which scale badly with large values of $n$.
Table \ref{tbl LP algorithms} has already considered the benefit of sparse matrix multiplication; see Section~\ref{sec: experiments} for an empirical comparison of the computational efficiency of our cutting plane algorithm against existing commercial solvers, and Supplementary Material A for more details on the comparison. 
 
\begin{table}[t]
\begin{tabular}{ccccc}
\multirow{2}{*}{Method} & \multirow{2}{*}{Conv.} & \multicolumn{2}{c}{Time}        & \multirow{2}{*}{Space} \\ \cline{3-4}
                        &                        & Init.    & Per Iter.            &                        \\ \hline
Ours                    & Fast                   & $O(n^2)$ & $O(n|\calD||\calY|)$ & $O(n|\calD||\calY|)$   \\
Simplex                 & Slow                   & $O(n^2)$ & $O(n^3)$             & $O(n^2)$               \\
IPM                     & Fast                   & $O(n^2)$ & $O(n^3)$             & $O(n^2)$               \\
PDHG                    & Slow                   & $O(n^2)$ & $O(n^2)$             & $O(n^2)$              
\end{tabular}
\caption{Convergence speeds and complexity of different LP algorithms.}\label{tbl LP algorithms}
\end{table}

\section{FairWASP-PW: Extension to Pairwise Demographic Parity Constraints}\label{sec: fairwasp-pw}
As pointed out in \citet{calmon2017optimized}, demographic parity can be expressed in multiple equivalent forms.
In particular, we can rewrite \eqref{type1} to  constrain the selection rates to be (approximately) equal across groups $D=d$, rather than constraining them to be equal to the marginal distribution of $Y$ in the original dataset: 
\begin{equation}\label{type2}
	J\left(p_{Z;\theta}(y | d_1), p_{Z;\theta}(y | d_2)\right) \leq \epsilon, \ \forall d_1, d_2 \in \mathcal{D}, y \in\calY \ .
\end{equation}
This turns the optimization problem in \eqref{pro general optimization framework} into:
\begin{equation}\label{pro general optimization framework -- type II}
    \begin{aligned}
    &\min_{\theta \in \calI^n \cap \Delta_n} \ \mathcal{W}_c(p_{Z;\theta},  p_{Z;e}) \\
    & \text{s.t. } J\left(p_{Z;\theta}(y | d_1), p_{Z;\theta}(y | d_2)\right) \leq \epsilon, \ \forall d_1, d_2 \in \mathcal{D}, y \in\calY \ .
    \end{aligned}
\end{equation}
%
%
This section introduces FairWASP-PW, which extends FairWASP to constraints~\eqref{type2}. We show how to solve \eqref{pro general optimization framework -- type II} by (i) pointing out a connection between constraints \eqref{type1} and \eqref{type2}, (ii) reformulating problem \eqref{pro general optimization framework -- type II} and connecting it to problem~\eqref{pro general optimization framework} ,and (iii) solving \eqref{pro general optimization framework -- type II} via zero-th order optimization.




\subsubsection{(1) Connection between constraints 
 \eqref{type1} and \eqref{type2}.}

For any $|\calY|$-vector $t \in [0,1]^{|\calY|}$ denoting the marginal distribution $p_Y(y)$ in \eqref{type1}, let $\Theta_{\epsilon;t}$ denote the $\theta$ that satisfies the fairness constraint~\eqref{type1}:
\begin{equation}\label{def type 1 theta}
\Theta_{\epsilon;t} \defeq  \{ \theta\in\Delta_n: J\left(p_{Z;\theta}(y | d), t\right) \leq \epsilon, \  \forall \ d \in \mathcal{D}, y \in\calY \}.
\end{equation}
Hence, the feasible sets of~\eqref{pro general optimization framework} under constraint~\eqref{type1}  is $\calI^n \cap  \Theta_{\epsilon;\bar{t}}$, where $\bar{t}_y= p_Y(y)$.
%
As for the feasible set of problem~\eqref{pro general optimization framework -- type II} under constraint~\eqref{type2}, define
\begin{equation}\label{def type 2 theta}
     \Theta_{\epsilon} \defeq
\left\{ \theta\in\Delta_n: \left.\begin{array}{c} 
 J\left(p_{Z;\theta}(y | d_1), p_{Z;\theta}(y | d_2)\right) \leq \epsilon,  \\
\forall \ d_1,d_2 \in \mathcal{D}, y \in\calY \end{array} \right.\right\}, 
\end{equation}
obtaining $\calI^n \cap  \Theta_{\epsilon}$ as the corresponding feasible set.

The following lemma shows how the feasible set for problem~\eqref{pro general optimization framework} is a subset of problem~\eqref{pro general optimization framework -- type II}'s feasible set. More specifically, $\Theta_{\epsilon}$ is equal to the union of $\Theta_{\bar{\epsilon};\bar{t}}$ for all $\bar{t} \in [0,1]^\mathcal{Y}$ and a certain $\bar{\epsilon}$.


\begin{lemma}\label{lm bridge type12}
    Let $\Theta_{\epsilon; t}$ and $\Theta_{\epsilon}$ be defined as \eqref{def type 1 theta} and \eqref{def type 2 theta}, then it holds that for any $\epsilon \in [0,1)$, $
        \Theta_{\epsilon} = \bigcup_{t \in [0,1]^{\calY}}\Theta_{\bar{\epsilon}; t} $, 
    in which $\bar{\epsilon} = \sqrt{1+\epsilon} - 1$.
\end{lemma}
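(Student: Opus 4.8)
The plan is to establish the set equality through two inclusions, both resting on a single reformulation: by the definition of $J$ in \eqref{eq def of J(p,q)}, the condition $J(p,q)\le\delta$ is equivalent to the two-sided bound $\tfrac{1}{1+\delta}\le \tfrac{p}{q}\le 1+\delta$. Throughout I would write $q_{d,y}\defeq p_{Z;\theta}(y\mid d)$ for brevity, and keep in mind the algebraic identity $1+\bar\epsilon=\sqrt{1+\epsilon}$, so that $(1+\bar\epsilon)^2=1+\epsilon$.

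For the inclusion $\bigcup_{t}\Theta_{\bar\epsilon;t}\subseteq\Theta_\epsilon$, I would take any $\theta\in\Theta_{\bar\epsilon;t}$ and, for fixed $y$ and any $d_1,d_2$, factor the ratio as $\tfrac{q_{d_1,y}}{q_{d_2,y}}=\tfrac{q_{d_1,y}}{t_y}\cdot\tfrac{t_y}{q_{d_2,y}}$. Each factor is bounded by $1+\bar\epsilon$ by membership in $\Theta_{\bar\epsilon;t}$, so the product is at most $(1+\bar\epsilon)^2=1+\epsilon$; the symmetric argument gives the reciprocal bound, hence $J(q_{d_1,y},q_{d_2,y})\le\epsilon$ and $\theta\in\Theta_\epsilon$.

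For the reverse inclusion $\Theta_\epsilon\subseteq\bigcup_t\Theta_{\bar\epsilon;t}$, given $\theta\in\Theta_\epsilon$ I would construct a witness $t$ explicitly. Setting $M_y\defeq\max_d q_{d,y}$ and $m_y\defeq\min_d q_{d,y}$, the pairwise constraint applied to the extremal groups yields $M_y/m_y\le 1+\epsilon$; I would then take $t_y\defeq\sqrt{M_y m_y}$. Because $q_{d,y}\in[m_y,M_y]$, every ratio $q_{d,y}/t_y$ lies in $[\sqrt{m_y/M_y},\sqrt{M_y/m_y}]\subseteq[1/\sqrt{1+\epsilon},\sqrt{1+\epsilon}]=[1/(1+\bar\epsilon),1+\bar\epsilon]$, which is exactly $J(q_{d,y},t_y)\le\bar\epsilon$; and since $t_y$ is a geometric mean of numbers in $[0,1]$, the vector $t$ lies in $[0,1]^{\calY}$, so $\theta\in\Theta_{\bar\epsilon;t}$.

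The hard part is the reverse inclusion: one must exhibit a single $t_y$ that simultaneously satisfies the type-1 bound against all groups $d$. Equivalently, the feasible interval $[M_y/(1+\bar\epsilon),(1+\bar\epsilon)m_y]$ must be nonempty, which reduces precisely to $M_y\le(1+\epsilon)m_y$ and pins down the choice $\bar\epsilon=\sqrt{1+\epsilon}-1$ as the value at which two $\bar\epsilon$-slacks compose into one $\epsilon$-slack; the geometric mean is the natural centered point of this interval. A final technical point I would address is the degenerate case $m_y=0$, where the ratios defining $J$ are read as infinite: there both descriptions force $q_{d,y}=0$ for all $d$ together with $t_y=0$, so the correspondence persists.
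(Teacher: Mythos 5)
Your proposal is correct and follows essentially the same route as the paper's proof: the easy inclusion via factoring each pairwise ratio through $t_y$ and using $(1+\bar\epsilon)^2 = 1+\epsilon$, and the reverse inclusion via the geometric-mean witness $t_y = \sqrt{M_y m_y}$ with the max/min reformulation of the pairwise constraints. Your explicit treatment of the degenerate case $m_y = 0$ is a small refinement the paper omits, but it does not change the structure of the argument.
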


\begin{proof}[Proof Sketch]
    Inclusion from both sides can be shown via constructing an element of each set respectively.
\end{proof}

Note that $\Theta_{\epsilon}$ is not convex, as the union of convex sets is not necessarily convex, making problem~\eqref{pro general optimization framework -- type II} not convex.

\subsubsection{(2) Reformulation of Problem~\eqref{pro general optimization framework -- type II}.}

Using Lemma~\ref{lm bridge type12}, we can rewrite problem~\eqref{pro general optimization framework -- type II} as:
\begin{equation}\label{pro type2-1}
		\min_{\theta \in\mathbb{R}^n} \ \calW_c(p_{Z;\theta},  p_{Z;e}) 
		\ \ \text{s.t.} \   \theta \in  \calI^n \cap (\cup_{t \in [0,1]^{\calY}}\Theta_{\bar{\epsilon}; t}) \ ,  
\end{equation}
which is in turn equivalent to the following problem that simultaneously optimizes over $t$:
\begin{equation}\label{pro type2-2}
		\min_{\theta \in\mathbb{R}^n , t \in [0,1]^{\calY}} \  \calW_c(p_{Z;\theta},  p_{Z;e}) \ \
		\text{s.t.}  \    \theta \in   \calI^n \cap \Theta_{\bar{\epsilon}; t} \ . 
\end{equation}
Compared with problem~\eqref{pro general optimization framework}, problem \eqref{pro type2-2} has $t$ as part of the decision variables with $p_Y(y) = t$ and $\epsilon = \bar{\epsilon}$. In other words, if we denote $H_I(t;\bar{\epsilon})$ the optimal objective values for the MIP in \eqref{pro general optimization framework}, then \eqref{pro type2-2} is equal to:
\begin{equation}\label{pro type2-3}
    \min_{t\in[0,1]^{\calY}} H_I(t;\bar{\epsilon}).
\end{equation}
Once the optimal $t^\star$ of \eqref{pro type2-3} is obtained, fixing $t=t^\star$ in \eqref{pro type2-2} and optimizing over $\theta$ yields the optimal weights $\theta^\star$.


\subsubsection{(3) Zero-th Order Optimization Methods for \eqref{pro type2-3}.}
We propose to employ zero-th order optimization methods for the minimization problem in \eqref{pro type2-3}. In our setting, this is a particularly efficient choice, as:

\begin{itemize}
    \item the value of $H_I(t;\bar{\epsilon})$ can be computed via the dual problem \eqref{pro dual problem}, as discussed above. Since the cost matrix remains unchanged, after solving \eqref{pro dual problem} for the first time, the complexity of solving the problem again with any different $t$ is only $\tilde{O}(n|\calY|^2|\calD|^2 \log(R/\eps))$;
    \item the problem in \eqref{pro type2-3} is of dimension $|\calY|$, so low-dimensional, with only unit box constraints.
\end{itemize}
%
%
Many methods have shown fast convergence to stationary points for very-low-dimensional problems in practice, such as the multi-dimension golden search method \cite{chang2009n} and the Nelder-Mead method \cite{gao2012implementing}. We opt for the latter in our implementation.

\subsubsection{Optimality of Integer Weights.} Note that once the optimal $t^\star$ of \eqref{pro type2-3} is obtained, the problem \eqref{pro type2-2}  with $t$ fixed as $t^\star$ is an instance of problem~\eqref{pro general optimization framework} with $p_Y(y) = t^\star$ and $\epsilon = \bar{\epsilon}$. Hence, according to Theorem~\ref{thm integer constraints} and Lemma~\ref{lm optimal MIP is optimal LP}, the optimality of integer weights also carries over to problem~\eqref{pro general optimization framework -- type II}.


\section{Experiments} \label{sec: experiments}

In this section, we use a synthetic dataset to provide an efficiency analysis of FairWASP against established state-of-the-art commercial solvers. In addition, we show on various real datasets how FairWASP achieves competitive performance compared to existing methods in reducing disparities while preserving accuracy in downstream classification settings.


\subsubsection{Synthetic dataset}

We generate a synthetic dataset in which one of the features is highly correlated with the protected variable $D$, in order to induce a dependency of the outcome on $D$. We generate a 
binary protected variable $D=\{0, 1\}$ and features $\mathbf{X} = [X_1, X_2] \in \mathbb{R}^2$, such that $X_1$ is dependent on the value of $D$ and $X_2$ is not. More specifically, $X_1 \sim \mathcal{U}[0,10] \cdot \mathbb{I}(D=1)$, where $\mathcal{U}$ indicates the uniform distribution and $\mathbb{I}$ the indicator function, so that $X_1 = 0$ if $D=0$, and $X_2 \sim \mathcal{N}(0,25)$. The outcome $Y$ is binary and defined as $Y= \mathbb{I}(X_1 + X_2 + \eps > m_X)$, where $m_X = \mathbb{E}(X_1 + X_2)$ and $\eps \sim \mathcal{N}(0,1)$ is random noise.

Figure \ref{fig:runtime} compares the runtime of the FairWASP and commercial solvers, Gurobi and Mosek, in solving problem~\eqref{LP} for the synthetic data with different number of samples $n$ (mean and standard deviation over 5 independent trails, with $n$ doubling from $n=100$ up to $n=12,800$). The runtime limit for all methods is set to 1 hour, which both commercial solvers exceed when $n > 10,000$. In contrast, FairWASP has a significantly faster runtime than commercial solvers, solving all optimization problems within $5$ seconds. As the commercial solvers are run with default settings, we show that the solutions found by FairWASP are comparable to the commercial solver solutions in Supplementary Material C.
\begin{figure}[t]
    \centering
    \includegraphics[width=1\columnwidth]{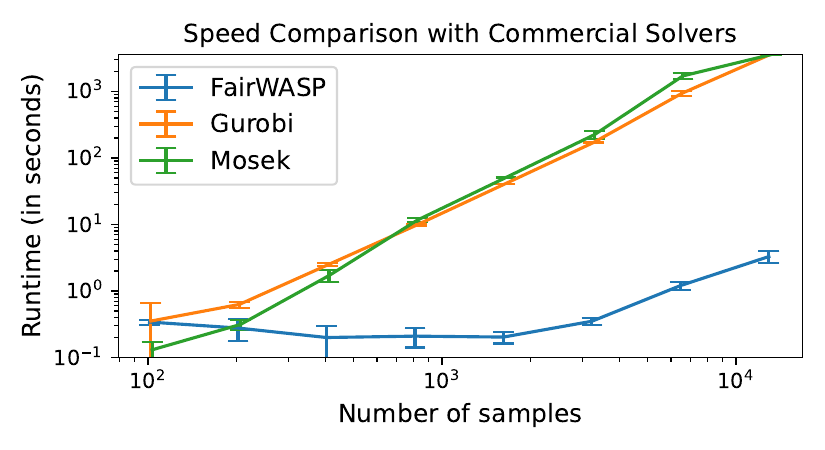}
    \caption{Speed comparison with commercial solvers. FairWASP has significantly better runtime and scalability.}
    \label{fig:runtime}
\end{figure}

\begin{figure*}[!ht]
    \centering
    \includegraphics[width=0.49\textwidth]{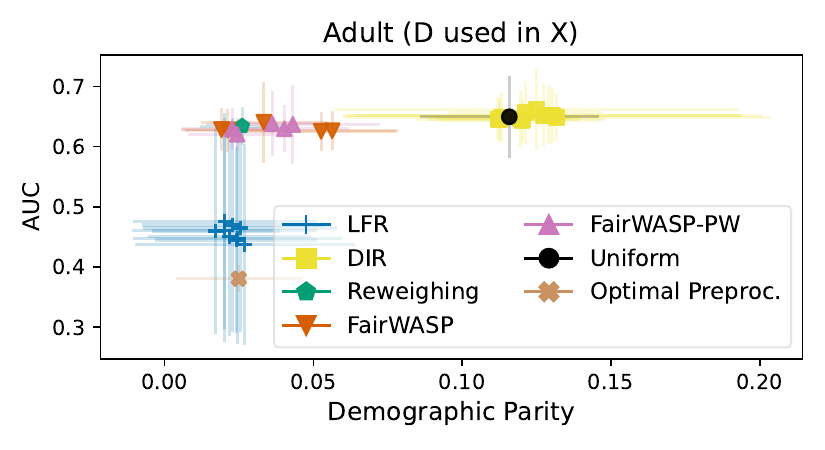} \hfill
    \includegraphics[width=0.49\textwidth]{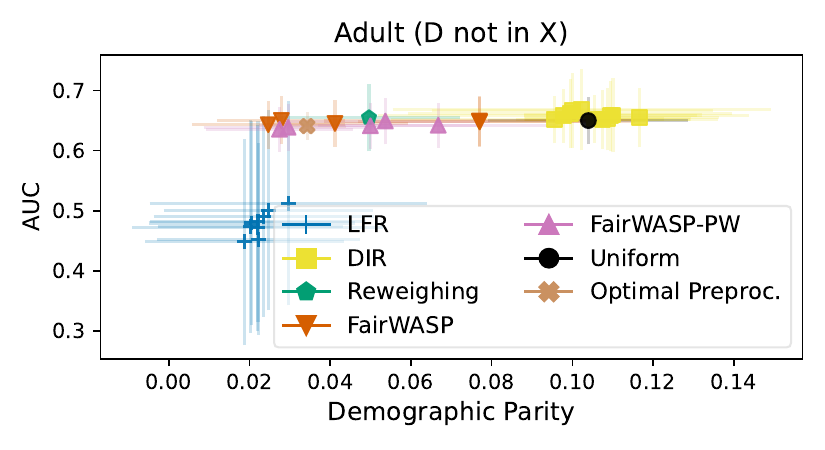} \\
    \includegraphics[width=0.49\textwidth]{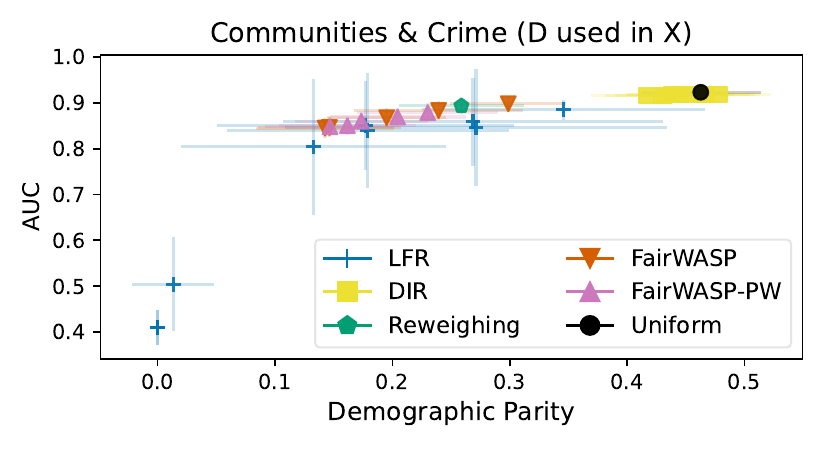} \hfill
    \includegraphics[width=0.49\textwidth]{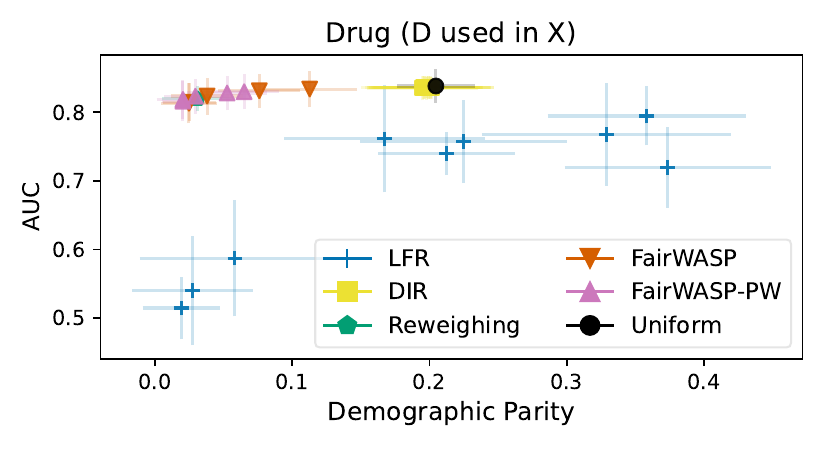} \\
    \caption{Downstream fairness-utility tradeoff, indicated by the demographic disparity and downstream classifier area under the curve (AUC). The x-axis refers to the absolute difference in the mean classifier outcome for the two groups, with a value of 0 corresponding to perfect demographic parity. Points and error bars correspond to averages plus/minus one standard deviation, computed over 10 different train/test split. FairWASP and FairWASP-PW consistently provide one of the best tradeoffs, significantly improving over using the original dataset as-is. See text and Supplementary Material C for more details.}
    \label{fig:resultsreal}
\end{figure*}

\subsubsection{Real Datasets}

We consider the following four real datasets widely used in the fairness literature \cite{fabris2022algorithmic}: (i) the  Adult dataset \cite{misc_adult_2}, (ii) the Drug dataset \cite{fehrman2017five}, (iii) the Communities and Crime dataset \cite{misc_communities_and_crime_183} and (iv) the German Credit dataset \cite{misc_statlog_(german_credit_data)_144}. We compare the performance of FairWASP and FairWASP-PW with the following existing pre-processing approaches:
\begin{itemize}
    \item \textit{DisparateImpactRemover} (DIR, \citealt{feldman2015certifying}), which transforms feature values in a rank-preserving fashion,
    \item \textit{Learning fair representations} (LFR, \citealt{zemel2013learning}), which identifies a latent representation uncorrelated with the protected attributes,
    \item  \textit{Reweighing} \cite{kamiran2012data}, which weights each sample according to the respective $(D, Y)$ values,
    \item \textit{Optimized pre-processing} \cite{calmon2017optimized}, which learns a probabilistic transformation to be applied to the dataset so that it satisfies group fairness, individual distortion and fidelity constraints.
\end{itemize}

We also include the \textit{Uniform} approach, which corresponds to the baseline of training on the dataset as-is. In all methods, the pre-processed dataset (or the dataset with no pre-processing, for the \textit{Uniform} approach) is used to train a multi-layer perceptron (MLP) classifier with one hidden layer with 20 nodes and ReLu activation function. Figure~\ref{fig:resultsreal} shows the fairness-utility tradeoff, indicated by the demographic disparity (defined in the caption) and the classifier AUC, for the Adult dataset (top row), Communities \& Crime dataset (bottom left) and Drug dataset (bottom right). We include the settings in which the protected variable $D$ is included among the features $X$ or not; the latter corresponds to the realistic scenario in, e.g., loan credit approvals, in which the US Equal Credit Opportunity Act of 1974\footnote{\url{https://www.law.cornell.edu/uscode/text/15/1691}} prohibits the use of such protected features. In all settings, FairWASP and FairWASP-PW are consistently part of the so-called ``Pareto frontier'' of the fairness utility tradeoff \cite{ge2022toward}, meaning they usually achieve either the best or among the best fairness-utility tradeoffs (closest to the $(0,1)$ in the top left corner), significantly improving over the empirical distribution (the \emph{Uniform} approach). See Supplementary Material C for more details on datasets, hyper-parameter settings and downstream fairness-accuracy tradeoffs for all datasets.

\section{Conclusions}

We propose FairWASP, a novel pre-processing algorithm that returns sample-level weights for a classification dataset without modifying the training data. FairWASP solves an optimization problem that minimizes the Wasserstein distance between the original and the reweighted dataset while satisfying demographic parity constraints. We solve the optimization problem by reformulating it as a mixed-integer program, for which we propose a highly efficient algorithm that we show to be significantly faster than existing commercial solvers. FairWASP returns integer weights, which we show to be optimal, and hence which can be understood as eliminating or duplicating existing samples, making it compatible with any downstream classification algorithm. We empirically show how FairWASP achieves competitive performance with existing pre-processing methods in reducing discrimination while maintaining accuracy in downstream classification tasks.


For future work, we would like to (i) characterize the finite sample properties of FairWASP for the downstream fairness-utility tradeoff, (ii) explore the downstream effect of using different  distances in the calculation of the cost matrix $C$, such as the Wasserstein transform \cite{memoli19awasstransf}, and (iii) extend the proposed optimization framework to non-linear fairness constraints as well as to general LPs and MIPs with similar structures.

\section*{Acknowledgments}
Some of this work was performed while the first author was at JPMorgan Chase \& Co. This paper was prepared for informational purposes by the Artificial Intelligence Research group of JPMorgan Chase \& Co. and its affiliates (``J.P. Morgan''), and is not a product of the Research Department of J.P. Morgan. J.P. Morgan makes no representation and warranty whatsoever and disclaims all liability, for the completeness, accuracy or reliability of the information contained herein. This document is not intended as investment research or investment advice, or a recommendation, offer or solicitation for the purchase or sale of any security, financial instrument, financial product or service, or to be used in any way for evaluating the merits of participating in any transaction, and shall not constitute a solicitation under any jurisdiction or to any person, if such solicitation under such jurisdiction or to such person would be unlawful.

\newpage

\appendix

\begin{center}
\LARGE{\textbf{SUPPLEMENTARY MATERIALS}}
\end{center}

\section*{A: More Details on Cutting Plane Method and Comparison with Other LP Algorithms}

The cutting plane method is a class of methods for convex problems when having access to their separation oracles. We now show that, in Algorithm~\ref{alg:cutting plane method} the separation oracle in line~\ref{line:seperation oracle} is computationally inexpensive as it has the same complexity as computing the subgradient.
The mathematical justification for this fact is as follows. For any $\bar{\lambda} \in \mathbb{R}^m_+$,  suppose that $g$ is a subgradient of $F(\cdot)$ at $\bar{\lambda}$ and then  due to the definition of the subgradients on convex problems,
$g^\top (\lambda^\star - \bar{\lambda} ) \le F(\lambda^\star) - F(\bar{\lambda}) \le 0$ for any $\lambda^\star \in \Lambda^\star$.
For any $\bar{\lambda} \notin \mathbb{R}^m_+$, there exists $j \in [m]$ such that $\bar{\lambda}_j < 0$, and then let $g$  be defined such that $g_j = -1$ and $g_i = 0$ for any $i \neq j$. For this $g$, $g^\top (\lambda^\star - \bar{\lambda} ) = -\lambda_j^\star + \bar{\lambda}_j \le 0$. 

Overall,  in line \ref{line:seperation oracle} of Algorithm \ref{alg:cutting plane method}, the complexity of obtaining a separation oracle for \eqref{eq equivalent dual} requires obtaining a subgradient of  $\lambda^k$ or finding a negative entry of $\lambda^k$. In the subsections below, we show how to efficiently compute subgradients and how  Algorithm \ref{alg:cutting plane method} applies on \eqref{eq equivalent dual}.


\subsubsection{Efficient computation of subgradients.} 

The subgradients and function values of problem \eqref{eq equivalent dual} can be very efficiently computed. 

\begin{lemmaS}\label{lm complexity of first order oracle}
    Let $[n]$ be divided into at most $|\calD||\calY|$ groups $G_1, G_2,\dots, G_{|\calD||\calY|}$ so that in each group, all individual data have the same $D_i$ and $Y_i$. Efficient computation of subgradients requires first computing $\arg\min_{l\in G_j}C_{ij}$ for each row $i$ and any group $j$, which requires $O(n^2)$ flops and $O(n|\calY||\calD|)$ space. After that, computing the subgradient and function value of $F(\lambda)$ for any $\lambda$ requires only $O(n|\calY||\calD|)$ flops and $O(n|\calY||\calD|)$ space 
\end{lemmaS}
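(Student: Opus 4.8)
The plan is to exploit the block structure of the constraint matrix $A$, which collapses the apparently $O(n)$-sized row-maximizations defining $G(\bar C)$ into maximizations over only $|\calD||\calY|$ group representatives. First I would record the explicit form of $A$: reading off the coefficient of $\theta_k$ in each fairness constraint in \eqref{eq linear fairness constraints}, one sees that $A_{jk}$ depends only on the pair $(D_k,Y_k)$, i.e.\ only on which of the groups $G_1,\dots,G_{|\calD||\calY|}$ the column index $k$ lies in; columns of $A$ belonging to the same group are identical. Writing $\bar C = \sum_{j=1}^m \lambda_j e a_j^\top - C$ entrywise then gives $\bar C_{ik} = (A^\top\lambda)_k - C_{ik}$, and $(A^\top\lambda)_k$ takes a common value $v_q$ for every $k$ in group $G_q$, with the vector $(v_q)_q$ computable in $O(m|\calD||\calY|)$ flops.

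Next I would use this to rewrite the row-maxima of Lemma~\ref{lm first order oracles}. For each row $i$,
\[
\max_{k\in[n]} \bar C_{ik}
= \max_{q}\,\max_{k\in G_q}\bigl(v_q - C_{ik}\bigr)
= \max_{q}\Bigl(v_q - \min_{k\in G_q} C_{ik}\Bigr),
\]
so the only information about $C$ that is ever consulted is the group-wise minima $M_{iq}\defeq \min_{k\in G_q}C_{ik}$ together with an attaining index. This is exactly the precomputation in the statement. Bounding it is routine: scanning each of the $n$ rows of $C$ once and bucketing its $n$ entries by group costs $O(n)$ per row, hence $O(n^2)$ flops overall, while storing the $n\times|\calD||\calY|$ array $M$ (and the attaining indices) costs $O(n|\calD||\calY|)$ space.

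With $M$ available, I would bound the per-query cost of evaluating $F(\lambda)=G(\bar C(\lambda))$ and returning a subgradient. Forming $(v_q)_q$ and then, for each row $i$, a maximizing group $q_i^\star\in\arg\max_q (v_q-M_{iq})$ costs $O(|\calD||\calY|)$ per row, i.e.\ $O(n|\calD||\calY|)$ in total, and yields both $F(\lambda)=\sum_i\bigl(v_{q_i^\star}-M_{i q_i^\star}\bigr)$ and, implicitly, the optimizer $P^\star$ of Lemma~\ref{lm first order oracles} (row $i$ places its unit mass in group $G_{q_i^\star}$). For the subgradient I would pass through the chain rule for the linear map $\lambda\mapsto\bar C$: since $P^\star\in\partial G(\bar C)$, the vector with entries $g_j=\langle P^\star, e a_j^\top\rangle=\sum_i a_{j,\,j_i^\star}$ is a subgradient of $F$ at $\lambda$, where $j_i^\star\in G_{q_i^\star}$ is the chosen maximizer in row $i$. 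Because $a_{j,k}$ depends only on the group of $k$, writing $a_j^{(q)}$ for its common value on $G_q$ gives $g_j=\sum_q n_q^\star\, a_j^{(q)}$ with $n_q^\star\defeq|\{i: q_i^\star=q\}|$; the counts $n_q^\star$ are formed in $O(n)$ flops and all $m$ entries of the subgradient assembled in $O(m|\calD||\calY|)$ flops.

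I expect the only real subtlety to be the complexity bookkeeping, not any conceptual step: I must verify that the auxiliary $O(m|\calD||\calY|)$ and $O(|\calD|^2|\calY|^2)$ terms incurred when forming $(v_q)_q$ and the subgradient are absorbed into the claimed $O(n|\calD||\calY|)$ bound. This follows from $m\le 2|\calD||\calY|$ together with the standing assumption $|\calD||\calY|\ll n$, which yields $(|\calD||\calY|)^2\le n|\calD||\calY|$. Correctness of the subgradient formula --- the chain rule applied to the nonsmooth convex $G$ composed with an affine map --- is immediate from the subgradient rule for compositions with affine maps, using the $P^\star\in\partial G(\bar C)$ furnished by Lemma~\ref{lm first order oracles}.
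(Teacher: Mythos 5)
Your proposal is correct and follows essentially the same route as the paper's proof: the key observation that the entries of $\sum_j \lambda_j e a_j^\top$ (equivalently, the columns of $A$) are constant on the groups determined by $(D_k,Y_k)$, the $O(n^2)$ precomputation of group-wise row minima of $C$, and the chain-rule subgradient $\bigl(\langle e a_1^\top,P^\star\rangle,\dots,\langle e a_m^\top,P^\star\rangle\bigr)$ are all exactly the paper's argument. Your only deviation is assembling the subgradient from group counts $n_q^\star$ in $O(n+m|\calD||\calY|)$ flops rather than via the $m$ sparse inner products with $P^\star$ in $O(mn)$ flops as the paper does; this is a marginally tighter piece of bookkeeping that stays within the same claimed bounds.
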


\begin{proof}
    Due to the chain rule, let a subgradient of $G(\cdot)$ at $\sum_{j=1}^m \lambda_j e a_j^\top  -C $ be $P^\star$, then the subgradient of $F(\cdot)$ at $\lambda$ is $\left(\langle  e  a_1^\top , P^\star \rangle, \dots,\langle  e  a_m^\top , P^\star \rangle \right)$.  And Lemma \ref{lm first order oracles} shows that the function value of $F(\cdot)$ at $\lambda$  is equal to $\langle C - \sum_{j=1}^m \lambda_j e a_j^\top  , P^\star \rangle$. 
    
    According to Lemma \ref{lm first order oracles}, 
    the computation of the subgradients and function values requires the largest component in each row of the matrix $C_{\lambda} \defeq \sum_{j=1}^m \lambda_j e a_j^\top-C$.
    Here the rows of matrix $A$ corresponds to constraints in   \eqref{eq linear fairness constraints}. If the column indices of two components at the same row of $A$ correspond to the same class  and label, it could be observed from \eqref{eq linear fairness constraints} that the values of these components are the same.  In other words, there exists at most $|\calY||\calD|$ groups, as a partition of $[n]$, so that in any row of $\sum_{j=1}^m \lambda_j e a_j^\top$, the components in the same group are the same. Therefore, let the groups be $G_1, G_2,\dots, G_{|\calD||\calY|}$ and let 
    $c_{ik_l^i}$ be the minimizer of $C$'s components in the $i$-th row and the $l$-th group, $\arg\min_{k\in G_l}C_{ik}$,
    and then a maximizer of the $i$-th row of $C_{\lambda}$ could be found from the components with column index $k_l^i$ for a certain $l \in [|\calD||\calY|]$. 
    This observation gives an efficient computation of gradients and function values via only using the components with column indices  $k_l^i$ for $l \in [|\calD||\calY|]$ in each row $i$ of $C_{\lambda}$. 

    To begin with, computing the whole matrix $C$  requires $O(n^2)$ flops so computing $\arg\min_{l\in G_j}C_{ij}$ for each row $i$ and any group $j$ requires at most $O(n^2)$ flops. However, these optimization problems are separable, so if computing serially, they only use $O(n|\calY||\calD|)$ space.

    After that, computing the subgradient $P^\star$ needs only the maximizer of each row of $\sum_{j=1}^m \lambda_j e a_j^\top  -C$. 
    The maximizers occur in the components in each row $i$ with column indices $k_l^i$ for $l \in [|\calD||\calY|]$, so computing these components and finding the smallest one require $O(n|\calY||\calD|)$ flops and at most $O(n)$ space. The obtained $P^\star$ is highly sparse, with only $n$ nonzero components. 

    Next, the function value of $F(\cdot)$ at $\lambda$  is $\langle C - \sum_{j=1}^m \lambda_j e a_j^\top  , P^\star \rangle$, whose computation requires only $O(mn)$ flops thanks to the sparsity of $P^\star$. Since the number of linear constraints $m$ is no more than $2|\calY||\calD|$, computing the function value requires 
    only $O(n|\calY||\calD|)$ flops. As for the space, since $C$ could be replaced with only the components in each row $i$ with column indices $k_l^i$, it uses at most $O(n|\calY||\calD|)$ space.

    As for the subgradient, we have shown that it is $\left(\langle  e  a_1^\top , P^\star \rangle, \dots,\langle  e  a_m^\top , P^\star \rangle \right)$. Due to the sparsity of $P^\star$, computing this $m$ component-wise products only needs $O(mn)$ flops, which is still $O(n|\calY||\calD|)$. Similarly, it uses at most $O(n|\calY||\calD|)$ space.
    Overall, computing a subgradient also requires  only $O(n|\calY||\calD|)$ flops and $O(n|\calY||\calD|)$ space.
\end{proof}

Furthermore, notice that $|\calD|$ is the number of demographic classes and $|\calY|$ is the number of possible outcomes. Both of them are much smaller than $n$. Once the minimizers $\arg\min_{l\in G_j}C_{ij}$ for each row $i$ and group $j$ are computed beforehand, Lemma \ref{lm complexity of first order oracle} shows that we actually have very easily accessible subgradients and function values for \eqref{eq equivalent dual}.

\subsubsection{Cutting plane method for solving the dual problem.}

Given that we have shown \eqref{eq equivalent dual} is a low-dimension convex  program with subgradient oracles, there exists many well-establish algorithms. 
A well-known example is the ellipsoid method. In the same settings as Algorithm~\ref{alg:cutting plane method}, the ellipsoid method would generate ellipsoid $E_k$ in line \ref{line:nextE} and choose the centers of these ellipsoids as $\lambda^k$ in \ref{line:choose interior point}.
Suppose that the norm of the optimal $\lambda^\star$ is bounded by $R$ and then directly applying the ellipsoid method \cite{nesterov2018lectures} on \eqref{eq equivalent dual} requires $O((m^2\cdot \operatorname{SO} + m^4 )\cdot \log(mR/\eps))$  flops to get a solution whose distance to optimal solutions smaller than $\eps$, where $\operatorname{SO}$ denotes the complexity of a separation oracle. 
To the best of our knowledge, the cutting plane method with the best theoretical complexity is given by \citet{jiang2020improved}, who proposed an improved cutting plane method that only needs $O((m\cdot \operatorname{SO} + m^2 )\cdot \log(mR/\eps))$ flops. Note that here $m$ is at most $2|\calD||\calY|$ and far smaller than $n$. Overall, Corollary \ref{cor our complexity} follows by it.

Finally, for our implementation, we use the analytic center cutting plane method proposed by \citet{atkinson1995cutting} to solve \eqref{eq equivalent dual} and follow the improved implementation introduced by \citet{Boydanalyticcenter}.

\subsection{Details of the comparison}

The state-of-art commercial LP solvers, such as Gurobi \cite{gurobi} and Mosek \cite{mosek}, use the well-established simplex and interior point method. The simplex method is known to have no good computation guarantees and may suffer from slow convergence rates in some cases \cite{klee1972good}. Moreover, the simplex method solves an $n$ by $n$ linear system in each iteration. Although commercial solvers can efficiently utilize the sparsity, it is still not suitable for large-scale problems.
The interior point method enjoys faster convergence rates $\tilde{O}(nL\log(1/\eps))$\footnote{Here $L$ denotes the bit complexity of data.} \cite{renegar2001mathematical} (usually  much faster in practice) but each iteration requires constructing and solving an $n$ by $n$ linear system, which can be very time-consuming. If using standard matrix factorization, each iteration requires  $O(n^3)$ flops even when using the sparsity. It should be mentioned that better theoretical complexity results have been recently presented in \cite{lee2014path,peng2021solving} et al. by changing the barrier function and solving the linear system more efficiently, but these results are  far from practice and the state-of-art practical performance achieved by commercial solvers still use the traditional interior point methods. 
Recently, \citet{applegate2022faster,applegate2021practical} propose to use the primal-dual hybrid gradient (PDHG) to solve the large-scale LP problems. The PDHG avoids expensive matrix factorizations and instead does matrix-vector products in each iteration,  but it converges much slower than the interior point method in both theory and practice.   Table \ref{tbl LP algorithms} shows a comparison with the simplex method (Simplex), the interior point method (IPM) and the PDHG's theoretical convergence rates, and per iteration complexities in solving \eqref{LP}.

It should be further mentioned that all the other LP algorithms in Table \ref{tbl LP algorithms} could only solve the LP \eqref{LP} while our method  also solves the MIP \eqref{MIP}. State-of-art commercial solvers such as Gurobi use LP based branch-and-bound algorithms to solve MIPs so it is usually much slower than solving only the LP relaxation.


\section*{B: Other Theoretical Proofs}

\begin{proof}[Proof of Lemma \ref{lm first order oracles}]
     To begin with, since the optimal cost of a minimization LP is a concave function of the cost vector (Theorem 5.3 of \cite{bertsimas1997introduction}), $G(\bar{C})$ is a convex function of $\bar{C}$. 

    Next, the problem $\max_{P \in S_n}   \langle \bar{C} , P\rangle $ is equivalent to solving  $n$ separate smaller optimization problems
    \begin{equation}\label{pro small problem}
    \max \ c_{i}^\top p_i \ \ \text{s.t.} \  e ^\top p = 1, \ p \ge 0, 
    \end{equation}
    $\text{for } i \in [n]$,  where $c_{i}$ is the vector of the $i$-th row of $\bar{C}$ and $p_i$ denotes the $i$-th row of $P$. The optimal solution of \eqref{pro small problem} is then given by
    		$$
		p_{ij} = \left\{
		\begin{array}{cc}
			0 \ ,& \text{ if }  j \neq j_i^\star \\
			1 \ , & \text{ if } j = j_i^\star
		\end{array}
		\right. \text{ for } j \in [n] \ ,
		$$
  and consequently, the optimal objective of $G(\bar{C})$ is $\sum_{i=1}^n  c_{ij_i^\star}$.

  Let the optimal solution of $\arg\max_{P \in S_n}   \langle \bar{C}_1 , P\rangle $ be $P^\star$. For any $\bar{C}_2 \neq \bar{C}_1$, it holds that 
  $$
  \langle \bar{C}_2, P^\star \rangle \le G(\bar{C}_2)
  $$
  because $P^\star$ is feasible but not necessarily optimal for $\arg\max_{P \in S_n}   \langle \bar{C}_2 , P\rangle $. Note that 
  $$
  \langle \bar{C}_2, P^\star \rangle = G(\bar{C}_1) + \langle P^\star ,  \bar{C}_2 - \bar{C}_1 \rangle
  $$
  hence $P^\star$ serves as a subgradient of $G(\cdot)$ at $\bar{C}_1$.
\end{proof}

\begin{proof}[Proof of Theorem \ref{thm integer constraints}]
    As demonstrated earlier, $P^\star$ is an optimal solution for \eqref{pro general optimization model 2}, leading to the recovery of $\theta^\star= (P^\star)^\top  e $ as the optimal solution of \eqref{LP}. Therefore, $(\theta^\star, P^\star)$ solves \eqref{LP}. Owing to the computation of $P^\star$ in \eqref{eq compute optimal P},  components of $P^\star$ are either $0$ or $1$. Given that $\theta^\star = (P^\star)^\top  e $, each component of $\theta^\star$ must be an integer, indicating that $(\theta^\star, P^\star)$ is also feasible for \eqref{MIP}. Because the feasible set of the LP relaxation contains the feasible set of the original MIP, once an optimal solution of the LP relaxation satisfies the integer constraints, it also lies in the feasible set of the original MIP, and thus must be the optimal solution for the original MIP as well. Since \eqref{LP} is an LP relaxation for \eqref{MIP} and $\theta^\star$ also satisifies the integer constraints, $(\theta^\star, P^\star)$  must also be an optimal solution for \eqref{MIP}.
\end{proof}

\begin{proof}[Proof of Lemma \ref{lm bridge type12}]
    First of all, for any $t \in [0,1]^{\calY}$ and any   
    $\theta \in \Theta_{\bar{\epsilon}; t}$, we first show $\theta$ is also in  $\Theta_{\epsilon}$.
    It holds that for any $ d_1,d_2 \in \mathcal{D}$ and $y \in\calY$, 
    \begin{align}
        \frac{p_{Z;\theta}(y | d_1)}{p_{Z;\theta}(y | d_2)} - 1 &=
        \frac{p_{Z;\theta}(y | d_1)}{t} \cdot \frac{t}{p_{Z;\theta}(y | d_2)} - 1 \notag \\
        & \le (1+\bar{\epsilon})^2  -1 = \epsilon \ .
    \end{align}
    Symmetrically, $\frac{p_{Z;\theta}(y | d_2)}{p_{Z;\theta}(y | d_1)} - 1  \le \epsilon$. Therefore, $J\left(p_{Z;\theta}(y | d_1), p_{Z;\theta}(y | d_2)\right) \le \epsilon$ for any $y_1,y_2\in \calY$. This means $\theta$ is also in  $\Theta_{\epsilon}$, and then $ \bigcup_{t \in [0,1]^{\calY}}\Theta_{\bar{\epsilon}; t}\subseteq\Theta_{\epsilon} $.

    Next, for any $\theta \in \Theta_{\epsilon}$, we construct $t$ such that $\theta \in \Theta_{\bar{\epsilon}; t}$. 
    Due to the definition of \eqref{type2}, we have 
    \begin{equation}
        \begin{aligned}
            &\Theta_{\epsilon} \ = \Bigg\{ \theta\in\Delta_n: \frac{p_{Z;\theta}(y | d_1)}{p_{Z;\theta}(y | d_2)} - 1  \le \epsilon\text{ and} \\
            &\hspace{2cm} \frac{p_{Z;\theta}(y | d_2)}{p_{Z;\theta}(y | d_1)} - 1  \le \epsilon, \  \forall \ d_1,d_2 \in \mathcal{D}, y \in\calY \Bigg\} \\
            & \ = \left\{ \theta\in\Delta_n: \frac{\max\{p_{Z;\theta}(y | d):d\in\calD\}}{\min\{p_{Z;\theta}(y | d):d\in\calD\}} - 1  \le \epsilon, \  \forall \ y \in\calY \right\} \ . 
        \end{aligned}
    \end{equation}
    Next, we construct $t \in [0,1]^{\calY}$ as follows,
    $$
    t_y \defeq \sqrt{\max\{p_{Z;\theta}(y | d):d\in\calD\}\cdot \min\{p_{Z;\theta}(y | d):d\in\calD\}}
    $$
    and then with this $t$,  for any $y \in \calY$, 
    \begin{align}\label{eq max t}
    & \frac{\max\{p_{Z;\theta}(y | d):d\in\calD\}}{t_y} - 1 = \notag \\
    &\hspace{2.5cm} =\sqrt{ \frac{\max\{p_{Z;\theta}(y | d):d\in\calD\}}{\min\{p_{Z;\theta}(y | d):d\in\calD\}}} - 1 \notag \\
    &\hspace{2.5cm} \le \sqrt{1+\epsilon} - 1 = \bar{\epsilon}.
    \end{align}
    And similarly, for any $y \in \calY$,
    \begin{align}\label{eq min t}
    &\frac{t_y}{\min\{p_{Z;\theta}(y | d):d\in\calD\}} -1 = \notag \\
    &\hspace{2.5cm} =\sqrt{ \frac{\max\{p_{Z;\theta}(y | d):d\in\calD\}}{\min\{p_{Z;\theta}(y | d):d\in\calD\}}} - 1 \notag \\
    &\hspace{2.5cm} \le \sqrt{1+\epsilon} - 1 = \bar{\epsilon}.
    \end{align}
    The above \eqref{eq max t} and \eqref{eq min t} indicate that $J(p_{Z;\theta}(y | d),t_y)\le\bar{\epsilon}$ for any $d \in \calD$ and $y\in \calY$. This means $\theta$ is also in  $\Theta_{\bar{\epsilon};t}$, and then $\Theta_{\epsilon}  \subseteq\bigcup_{t \in [0,1]^{\calY}}\Theta_{\bar{\epsilon}; t}$. 
    
    Together with the previous result $ \bigcup_{t \in [0,1]^{\calY}}\Theta_{\bar{\epsilon}; t}\subseteq\Theta_{\epsilon}$, the proof completes.
\end{proof}

\section*{C: Experiments Details}

\subsection{Details of the synthetic datasets}

As mentioned in Section~\ref{sec: experiments}, we generate a synthetic dataset in which one feature is strongly correlated with the protected variable $D$ to induce a backdoor dependency on the outcome. We consider a binary protected variable, $D \in \{0,1\}$, which could indicate e.g., gender or race.
The synthetic dataset contains two features, a feature $X_1$ correlated with the protected variable and a feature $X_2$ uncorrelated with the protected variable.
For $D=0$, $X_1$ is uniformly distributed in $[0,10]$, while for $D=1$, $X_1 = 0$. Instead, $X_2$ is $5$ times a random variable from a  normal distribution $\mathcal{N}(0,1)$. Finally, the outcome $Y$ is binary, so $Y = \{0, 1\}$: $Y_i = 1$ when $Y_i > m_x + \eps_i$ and $Y_i = 0$ when $Y_i \le m_x + \eps_i$, where $m_x$ is the mean of $\{(X_1)_i + (X_2)_i\}_i$ and the noise $\eps_i$ comes from a normal distribution $\mathcal{N}(0,1)$

\subsection{More details on the speed comparison experiment.}
In the comparison with commercial solvers, all experiments run on an MacOS 13.0.1 machine, with 32G of RAM, and an Apple M1 Pro chip. 
We compare FairWASP (implemented in Python) with Gurobi 10.0 \cite{gurobi} and Mosek 10.1 \cite{mosek}. 
In problem \eqref{LP}, the pairwise distance $c(Z_i,Z_j)$ is defined as the Euclidean distance on the data after being normalized so that each feature has the same standard deviation.
The two commercial solvers use the default parameter setting (so that they can automatically choose the best method), except we restrict all methods to use only one thread for fairness of comparison. 
The parameter $\eps$ in constraints \eqref{type1} is set as $0.05$ for all different $n$.  FairWASP stops whenever the inner analytic center subproblem meets numerical issues or when the relative duality gap defined by $|obj_{p} - {obj}_d| / (1 + |obj_{p}| + |{obj}_d|)$ becomes smaller than or equal to $0.001$ or $10^{-3}$. Here $obj_p$ is $\langle C, P^k \rangle$ and $obj_d$ is $ - F(\lambda^k)$, for the dual solution $\lambda^k$ and its corresponding primal solution $P^k$.  The time limit is set as one hour for all methods and both the solvers exceed time limit $3,600$ seconds for the problem \eqref{MIP} for all tested different $n$. When $n$ is larger than $10^4$, the two solvers also exceed time limit even for \eqref{LP}. 

It should be noted that the solutions obtained by FairWASP are not as accurate as those obtained by solvers because they use different tolerances, but we will then show that the solutions obtained by FairWASP are already good enough for the goal of preprocessing. On the other hand, even if stopping the two solvers at a much lower accuracy level, they still require significantly longer time than FairWASP, due to the convergence behavior of the simplex and interior-point method and the much higher per iteration  complexity.

In addition to Figure \ref{fig:runtime} which compares the speed, we also define two error metrics for the solutions obtained by FairWASP. To be specific, the first error is relative objective gap, which is defined by 
$$
\frac{|\operatorname{obj}_{f} - \operatorname{obj}_{r}|}{|\operatorname{obj}_{f}|+|\operatorname{obj}_{r}| + 1} \ ,
$$
where $\operatorname{obj}_{f}$ is the objective value of the obtained FairWASP soltuion, and $\operatorname{obj}_{r}$ is the obtained objective value of Gurobi. The $1$ in the denominator is for preventing the occurrence of $0$ in the optimal objective. This objective error definition is commonly used as the termination criterion for commercial solvers, see the interior-point method for \citet{mosek}.

In addition to the relative objective gap, we also measure the feasibility error by the ``Fairness Violation''. To be specific, the fairness violation for weights $\theta \in \Delta_n$ is defined as:
$$
\begin{aligned}
\max_{d \in \calD, y \in \calY} \bigg\{0, \ & \frac{p_{T}(y)}{1 + \eps}  - p_{Z;\theta}(y|d) , \\
&   p_{Z;\theta}(y|d)  - (1 + \eps ) \cdot p_{T}(y)
\bigg\}.
\end{aligned}
$$
Note that the solution $\theta$ generated  by solving the \eqref{eq equivalent dual} will be guaranteed  to be in $\Delta_n$ so it is feasible if and only if the fairness violation is equal to zero. 

Figure \ref{fig:errors}  shows the quantities of these two kinds of errors of the FairWASP solutions for all different $n$ in $[100, 6400]$. It shows that both the relative objective gap and the fairness violation keep at a low level for all different $n$ and all different trials. This means although the FairWASP solutions are not as optimal as the solver solutions, they are still good enough for real applications. On the other hand, Figure \ref{fig:runtime} shows that the two commercial solvers are dramatically slower than FairWASP due to the matrix factorization used in the simplex or the interior-point methods.

\begin{figure}[htbp]
    \centering
    \includegraphics[width=1\columnwidth]{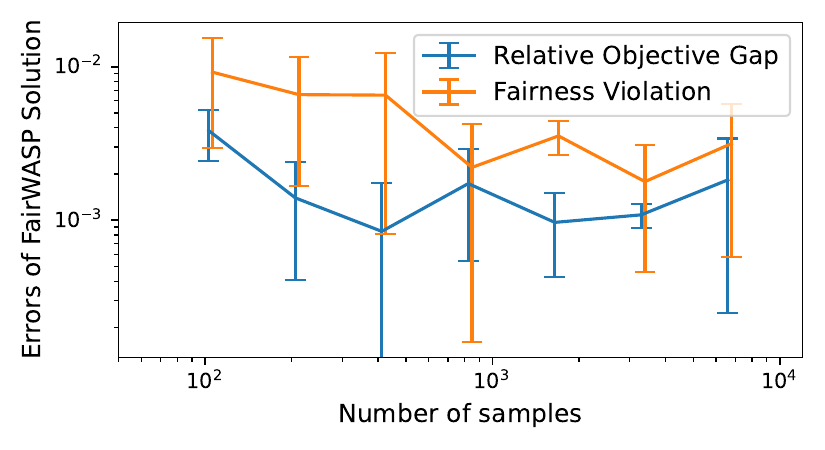}
    \caption{Relative objective gaps and fairness violations of the FairWASP solutions in the experiments in Figure \ref{fig:runtime}}
    \label{fig:errors}
\end{figure}

\subsection{Real Datasets Details}

We consider the following four real datasets widely used in the fairness literature \cite{fabris2022algorithmic}:

\begin{itemize}
    \item the \textit{Adult dataset} \cite{misc_adult_2}, which reports demographic data from the 1994 US demographic survey about $\sim 49,000$ individuals. We use all the available features for classification, including gender as the protected variable $D$ and whether the individual salary is more than $50,000$USD;
    \item the \textit{Drug dataset} \cite{fehrman2017five}, which contains drug usage history for $1885$ individuals. Features $X$ include the individual age, country of origin, education and scores on various psychological test. We use the individual gender as the protected variable $D$. The response $Y$ is based on whether the individual has reported to have never used the drug ``cannabis'' or not; 
    \item the \textit{Communities and Crime dataset} \cite{misc_communities_and_crime_183} was created towards the creation of a software tool for the US police department. The dataset contains socio-economic factors for $\sim 2,000$ communities in the US, along with the proportion of violent crimes in each community. As protected variable $D$, we include whether the percentage of the black population in the community is above the overall median. For the response $Y$, we use a binary indicator of whether the violent crimes percentage level is above the mean across all communities in the dataset;
    \item the \textit{German Credit dataset} \cite{misc_statlog_(german_credit_data)_144} reports a set of $1,000$ algorithmic credit decisions from a regional bank in Germany. We use all the available features, including gender as protected variable $D$ and whether the credit was approved as response $Y$.
\end{itemize}
Finally, we also evaluate the downstream fairness-utility tradeoff of the synthetic data generated in the previous subsection, with the number of samples equal to $n = 2,000$.

In addition, here below we describe the methods in details, along with the different hyper-parameters used for pre-processing:
\begin{enumerate}
    \item \textbf{Uniform}: Training on the empirical distribution as-is. There is no other parameters to adjust.
    \item \textbf{Disparate impact remover} (\textbf{DIR}, \citealt{feldman2015certifying}): A preprocessing technique that edits feature values to increase group fairness while preserving rank-ordering within groups.  This method contains one parameter that controls the repairing level, taking values between $0$ and $1$. In the experiments, we will show the results with this value being $0,  0.1,0.2,\dots,0.9$, $10$ different values. 
    \item \textbf{Learning fair representations} (\textbf{LFR}, \citealt{zemel2013learning}): A preprocessing technique that finds a latent representation which encodes the data well but obfuscates information about protected attributes.  This method contains different parameters $Ax$, $Ay$, $Az$ controlling the weight of the input recontruction quality term, the weight of the fairness constraint term and the output prediction error. \cite{zemel2013learning} reports the best result from setups $Ax = 0.01$ and $Ay,Az\in \{0.1, 0.5, 1, 5, 10\}$. In this paper, for simplicity of the results, we report all the results of using parameter setups from $Ax = 0.01$ and $Ay,Az\in \{0.1, 1, 10\}$.
    \item \textbf{Reweighing} \cite{kamiran2012data}: A preprocessing technique that weights the examples in each (group, label) combination differently to ensure fairness before classification. There is no other parameters to adjust.
    \item \textbf{Optimized preprocessing} \cite{calmon2017optimized}:  A preprocessing technique that learns a probabilistic transformation that edits the features and labels in the data with group fairness, individual distortion, and data fidelity constraints and objectives.  This method is only applicable for the dataset whose overall number of different individual samples is relatively small so usually all data should be first converted into a binary smaller dataset, then apply pre-processing on the binarized dataset, and finally train on the downstream model on the pre-processed data. Similarly, the test set should also be binarized for applying the trained model on it. There are multiple parameters to adjust in \cite{calmon2017optimized} so we only compare with it on the dataset Adult because the authors of \cite{calmon2017optimized} provide the binarized Adult dataset and  the setup of all the parameters. 
    \item \textbf{FairWASP}: Our method with fairness constraints \eqref{type1}, with the parwise distance $c(Z_i,Z_j)$ defined as the Euclidean distance on the standardized features. We report the results of setting $\eps$ of $\eqref{type1}$ as all values  in $\{0.001,0.01,0.1,0.2,0.3\}$.
    \item \textbf{FairWASP-GW}: Our method with group-wise fairness constraints \eqref{type2}, with the parwise distance $c(Z_i,Z_j)$ defined as the Euclidean distance on the standardized features. We report the results of setting $\eps$ of $\eqref{type2}$ as all values  in $\{0.001,0.01,0.1,0.2,0.3\}$.
\end{enumerate}

All computations are run on an Ubuntu machine with 32GB of RAM and 2.50GHz Intel(R) Xeon(R) Platinum 8259CL CPU. For all datasets, we randomly split $75\%$ of the data into training/test set, and change the split during each separate run; the training data are further separated into training and validation with $90/10$ to compute early stopping criteria during training. The downstream classifier used is a one-layer deep multi-layer perceptron with $20$ hidden layers, ReLu activation function in the hidden layer and softmax activation function in the final layer. The learning rate is set to $10^{-3}$, with a batch size of $32$, and a maximum number of epochs set to $500$ with early stopping evaluated on the separate validation set with a patience of $10$ epochs.
Figures~\ref{fig:resultsreal} and \ref{fig:allresults uniform distance} report the fairness-utility tradeoff indicated by demographic parity and AUC of the classifier respectively, with each scatter indicating one method-parameter pair. Each point is the average results over 10 separate runs, with the bars indicating the standard deviation. Note that in some runs the MLP classifier collapses into the trivial classifier, always returning the same outcome regardless of the input. Such trivial classifier is fair by definition, but achieves a $0.5$ AUC and is responsible for the large error bars, especially in the German Credit dataset. Overall, FairWASP and FairWASP-GW achieve competitive performance with existing pre-processing methods, often achieving similar utility as using the original empirical distribution but with much better demographic parity.

\begin{figure*}[!ht]
    \centering
    \includegraphics[width=0.49\textwidth]{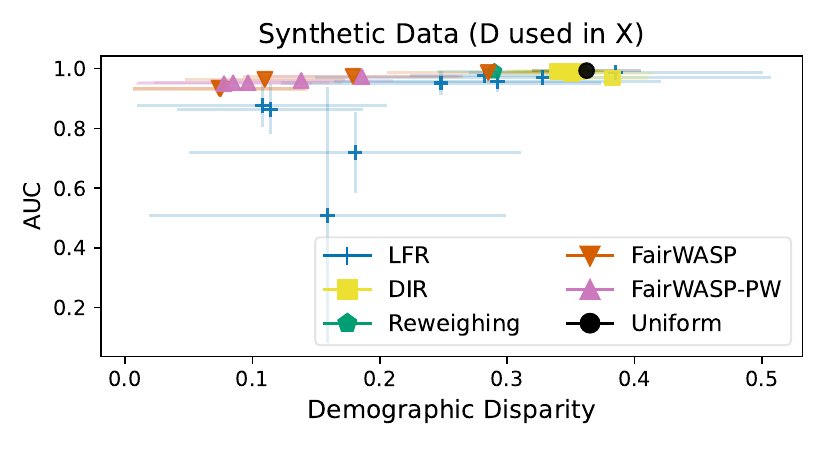} \hfill
    \includegraphics[width=0.49\textwidth]{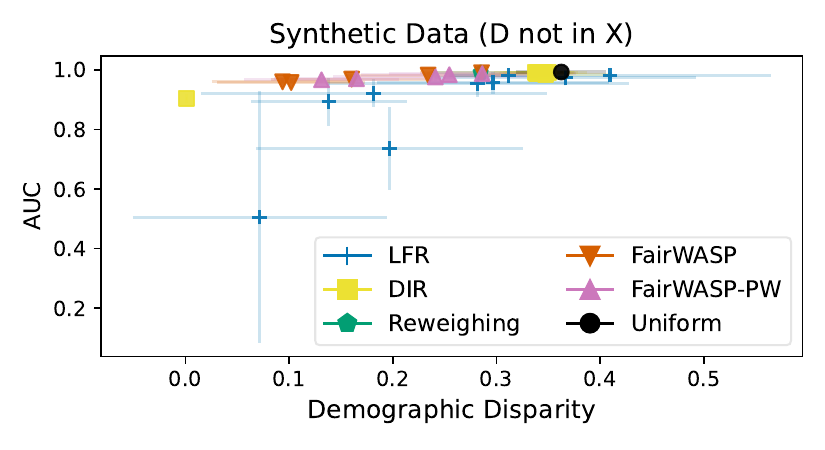} \\
    \includegraphics[width=0.49\textwidth]{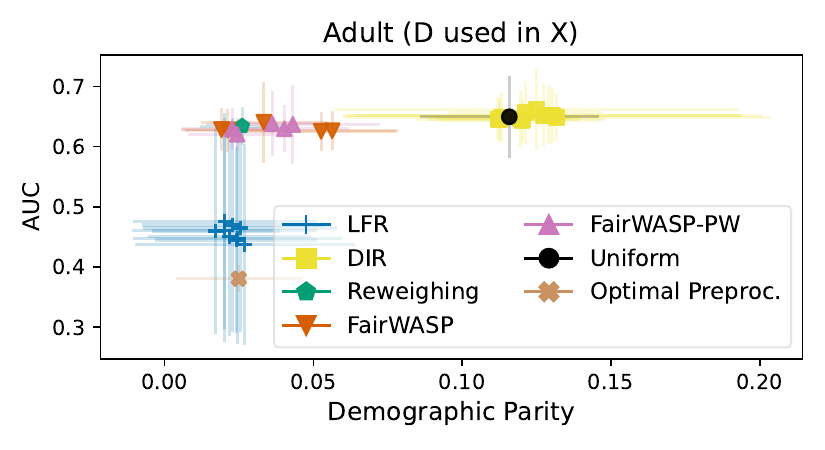} \hfill
    \includegraphics[width=0.49\textwidth]{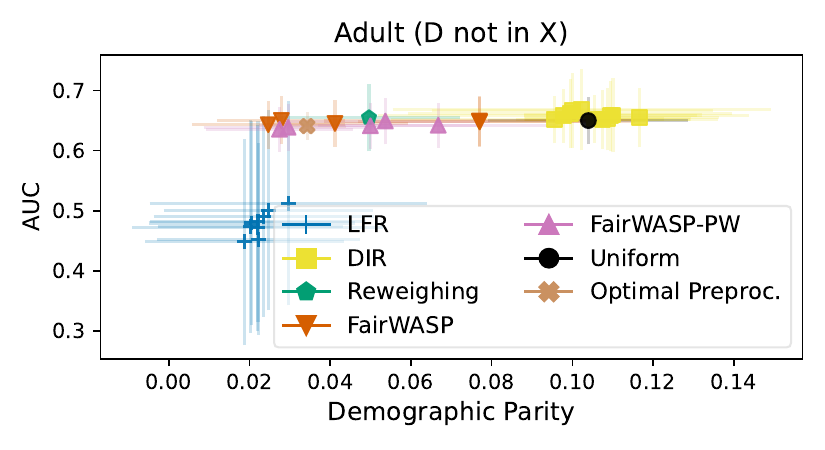} \\
    \includegraphics[width=0.49\textwidth]{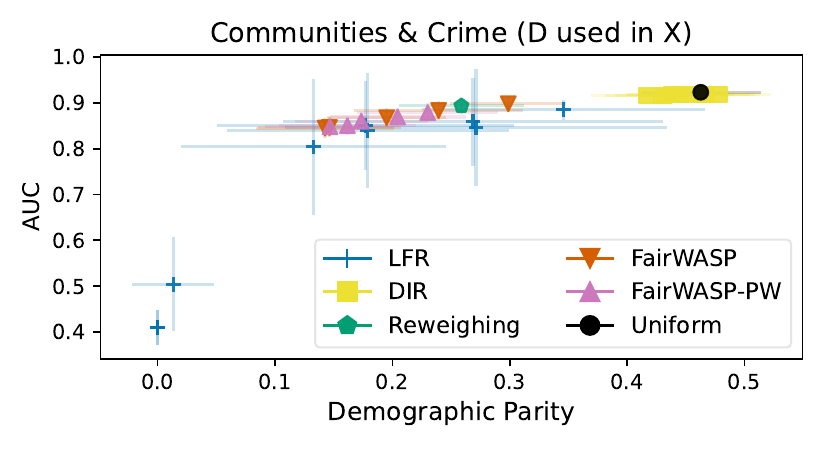} \hfill
    \includegraphics[width=0.49\textwidth]{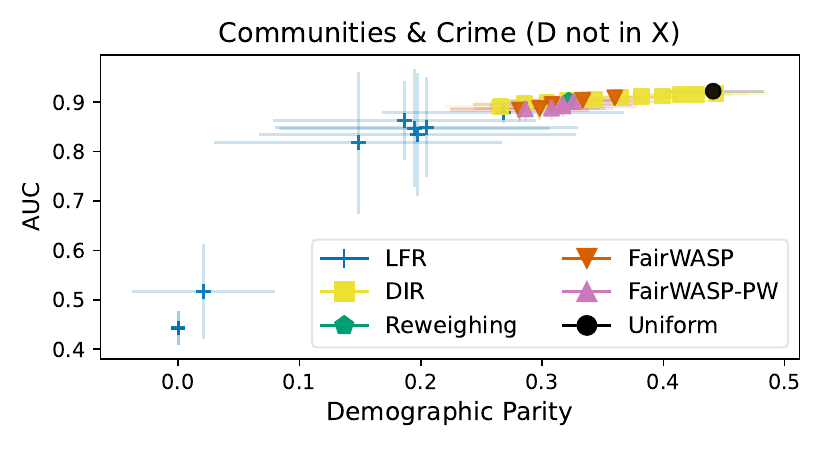} \\
    \includegraphics[width=0.49\textwidth]{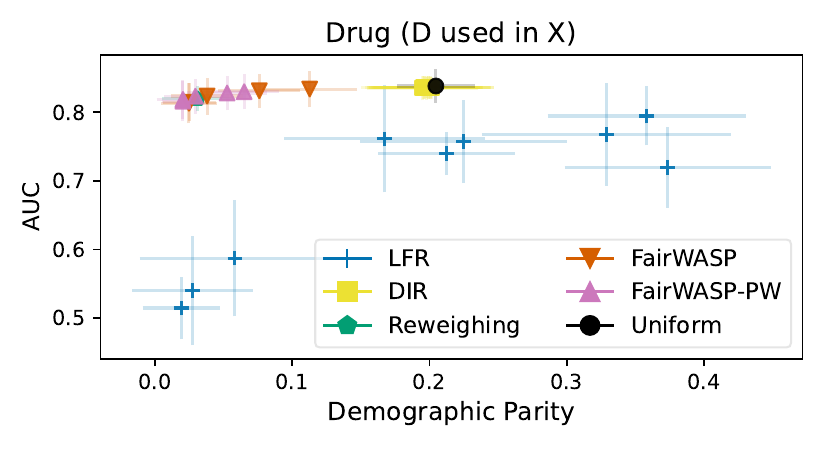} \hfill
    \includegraphics[width=0.49\textwidth]{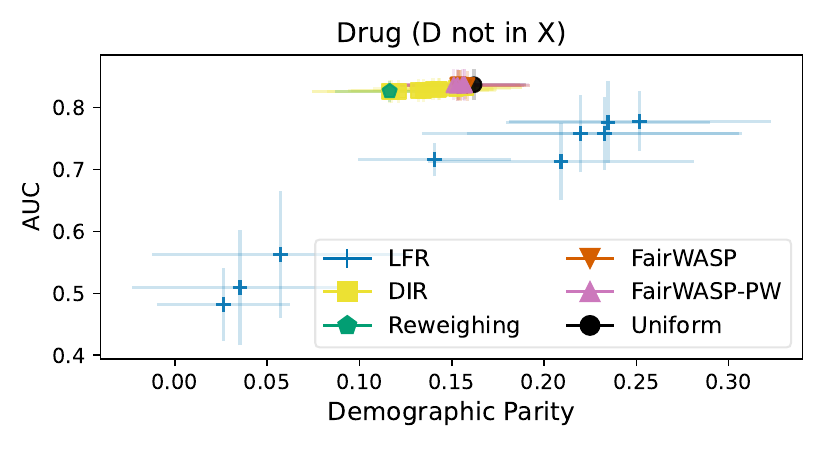} \\
    \includegraphics[width=0.49\textwidth]{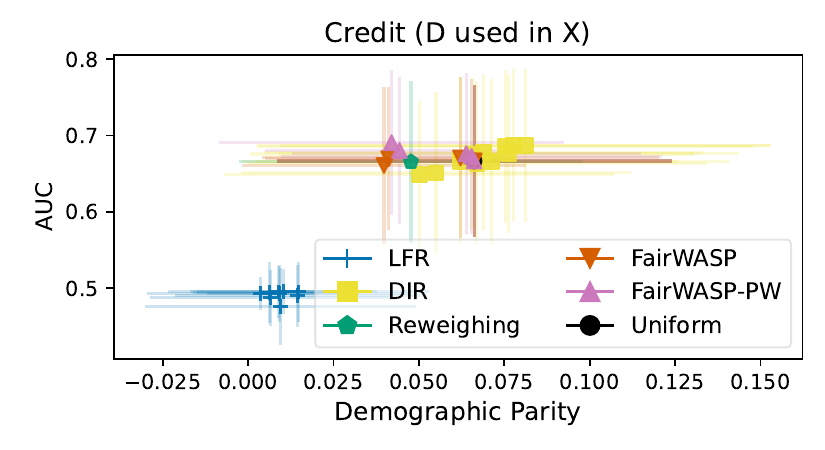} \hfill
    \includegraphics[width=0.49\textwidth]{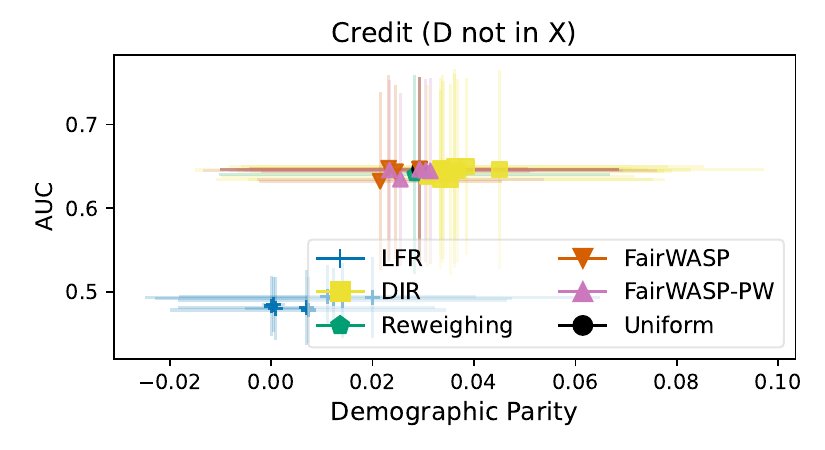} \\
    \caption{Downstream fairness-utility tradeoff, indicated by the demographic parity and downstream classifier area under the curve (AUC). Points and error bars correspond to averages plus/minus one standard deviation, computed over 10 different train/test split. Each point is a method-parameter combination. See text for more details.}
    \label{fig:allresults uniform distance}
\end{figure*}


\begin{thebibliography}{52}
\providecommand{\natexlab}[1]{#1}

\bibitem[{Applegate et~al.(2021)Applegate, D{\'\i}az, Hinder, Lu, Lubin,
  O'Donoghue, and Schudy}]{applegate2021practical}
Applegate, D.; D{\'\i}az, M.; Hinder, O.; Lu, H.; Lubin, M.; O'Donoghue, B.;
  and Schudy, W. 2021.
\newblock Practical large-scale linear programming using primal-dual hybrid
  gradient.
\newblock \emph{Advances in Neural Information Processing Systems}, 34:
  20243--20257.

\bibitem[{Applegate et~al.(2022)Applegate, Hinder, Lu, and
  Lubin}]{applegate2022faster}
Applegate, D.; Hinder, O.; Lu, H.; and Lubin, M. 2022.
\newblock Faster first-order primal-dual methods for linear programming using
  restarts and sharpness.
\newblock \emph{Mathematical Programming}, 1--52.

\bibitem[{Atkinson and Vaidya(1995)}]{atkinson1995cutting}
Atkinson, D.~S.; and Vaidya, P.~M. 1995.
\newblock A cutting plane algorithm for convex programming that uses analytic
  centers.
\newblock \emph{Mathematical Programming}, 69(1-3): 1--43.

\bibitem[{Bachem, Lucic, and Krause(2017)}]{bachem2017practical}
Bachem, O.; Lucic, M.; and Krause, A. 2017.
\newblock Practical coreset constructions for machine learning.
\newblock \emph{arXiv preprint arXiv:1703.06476}.

\bibitem[{Becker and Kohavi(1996)}]{misc_adult_2}
Becker, B.; and Kohavi, R. 1996.
\newblock {Adult}.
\newblock UCI Machine Learning Repository.
\newblock {DOI}: https://doi.org/10.24432/C5XW20.

\bibitem[{Bertsimas and Tsitsiklis(1997)}]{bertsimas1997introduction}
Bertsimas, D.; and Tsitsiklis, J.~N. 1997.
\newblock \emph{Introduction to linear optimization}, volume~6.
\newblock Athena Scientific Belmont, MA.

\bibitem[{Boyd, Vandenberghe, and Skaf(2018)}]{Boydanalyticcenter}
Boyd, S.; Vandenberghe, L.; and Skaf, J. 2018.
\newblock Analytic Center Cutting-Plane Method.
\newblock Lecture note, Dept.\ of Electrical Engineering, Stanford Univ.
\newblock Available on
  \url{https://web.stanford.edu/class/ee364b/lectures/accpm_notes.pdf}.
  Accessed: 2023-08-03.

\bibitem[{Calders, Kamiran, and
  Pechenizkiy(2009)}]{calders2009BuildingClassifiersIndependency}
Calders, T.; Kamiran, F.; and Pechenizkiy, M. 2009.
\newblock Building classifiers with independency constraints.
\newblock In \emph{Proceedings of the 2009 IEEE International Conference on
  Data Mining Workshops}, 13--18. IEEE.

\bibitem[{Calmon et~al.(2017)Calmon, Wei, Vinzamuri, Natesan~Ramamurthy, and
  Varshney}]{calmon2017optimized}
Calmon, F.; Wei, D.; Vinzamuri, B.; Natesan~Ramamurthy, K.; and Varshney, K.~R.
  2017.
\newblock Optimized pre-processing for discrimination prevention.
\newblock \emph{Proceedings of the 30th Advances in Neural Information
  Processing Systems}.

\bibitem[{Chai and Wang(2022)}]{chai2022FairnessAdaptiveWeights}
Chai, J.; and Wang, X. 2022.
\newblock Fairness with adaptive weights.
\newblock In \emph{Proceedings of the 39th International Conference on Machine
  Learning}, 2853--2866. PMLR.

\bibitem[{Chakraborty, Majumder, and
  Menzies(2021)}]{chakraborty2021BiasMachineLearning}
Chakraborty, J.; Majumder, S.; and Menzies, T. 2021.
\newblock Bias in machine learning software: {W}hy? {H}ow? {W}hat to do?
\newblock In \emph{Proceedings of the 29th ACM Joint Meeting on European
  Software Engineering Conference and Symposium on the Foundations of Software
  Engineering}, 429--440.

\bibitem[{Chang(2009)}]{chang2009n}
Chang, Y.-C. 2009.
\newblock N-dimension golden section search: {I}ts variants and limitations.
\newblock In \emph{Proceedings of the 2nd International Conference on
  Biomedical Engineering and Informatics}, 1--6. IEEE.

\bibitem[{Chzhen et~al.(2020)Chzhen, Denis, Hebiri, Oneto, and
  Pontil}]{chzhen2020FairRegressionWasserstein}
Chzhen, E.; Denis, C.; Hebiri, M.; Oneto, L.; and Pontil, M. 2020.
\newblock Fair regression with wasserstein barycenters.
\newblock \emph{Proceedings of the 33rd Advances in Neural Information
  Processing Systems}, 33: 7321--7331.

\bibitem[{Chzhen and Schreuder(2022)}]{chzhen2022minimax}
Chzhen, E.; and Schreuder, N. 2022.
\newblock A minimax framework for quantifying risk-fairness trade-off in
  regression.
\newblock \emph{The Annals of Statistics}, 50(4): 2416--2442.

\bibitem[{Claici, Genevay, and Solomon(2018)}]{claici2018wasserstein}
Claici, S.; Genevay, A.; and Solomon, J. 2018.
\newblock Wasserstein measure coresets.
\newblock \emph{arXiv preprint arXiv:1805.07412}.

\bibitem[{Du and Pardalos(1995)}]{du1995minimax}
Du, D.-Z.; and Pardalos, P.~M. 1995.
\newblock \emph{Minimax and applications}, volume~4.
\newblock Springer Science \& Business Media.

\bibitem[{Dwork et~al.(2012)Dwork, Hardt, Pitassi, Reingold, and
  Zemel}]{dwork2012FairnessAwareness}
Dwork, C.; Hardt, M.; Pitassi, T.; Reingold, O.; and Zemel, R. 2012.
\newblock Fairness through awareness.
\newblock In \emph{Proceedings of the 3rd Innovations in Theoretical Computer
  Science Conference}, 214--226.

\bibitem[{Fabris et~al.(2022)Fabris, Messina, Silvello, and
  Susto}]{fabris2022algorithmic}
Fabris, A.; Messina, S.; Silvello, G.; and Susto, G.~A. 2022.
\newblock Algorithmic fairness datasets: {T}he story so far.
\newblock \emph{Data Mining and Knowledge Discovery}, 36(6): 2074--2152.

\bibitem[{Fehrman et~al.(2017)Fehrman, Muhammad, Mirkes, Egan, and
  Gorban}]{fehrman2017five}
Fehrman, E.; Muhammad, A.~K.; Mirkes, E.~M.; Egan, V.; and Gorban, A.~N. 2017.
\newblock The five factor model of personality and evaluation of drug
  consumption risk.
\newblock In \emph{Data Science: Innovative Developments in Data Analysis and
  Clustering}, 231--242. Springer.

\bibitem[{Feldman et~al.(2015)Feldman, Friedler, Moeller, Scheidegger, and
  Venkatasubramanian}]{feldman2015certifying}
Feldman, M.; Friedler, S.~A.; Moeller, J.; Scheidegger, C.; and
  Venkatasubramanian, S. 2015.
\newblock Certifying and removing disparate impact.
\newblock In \emph{Proceedings of the 21th ACM SIGKDD International Conference
  on Knowledge Discovery and Data Mining}, 259--268.

\bibitem[{Gao and Han(2012)}]{gao2012implementing}
Gao, F.; and Han, L. 2012.
\newblock Implementing the {N}elder-{M}ead simplex algorithm with adaptive
  parameters.
\newblock \emph{Computational Optimization and Applications}, 51(1): 259--277.

\bibitem[{Ge et~al.(2022)Ge, Zhao, Yu, Paul, Hu, Hsieh, and
  Zhang}]{ge2022toward}
Ge, Y.; Zhao, X.; Yu, L.; Paul, S.; Hu, D.; Hsieh, C.-C.; and Zhang, Y. 2022.
\newblock Toward Pareto efficient fairness-utility trade-off in recommendation
  through reinforcement learning.
\newblock In \emph{Proceedings of the 15th ACM International Conference on Web
  Search and Data Mining}, 316--324.

\bibitem[{{Gurobi Optimization, LLC}(2023)}]{gurobi}
{Gurobi Optimization, LLC}. 2023.
\newblock {Gurobi Optimizer Reference Manual}.
\newblock Available on \url{https://www.gurobi.com} Accessed: 2023-08-03.

\bibitem[{Hofmann(1994)}]{misc_statlog_(german_credit_data)_144}
Hofmann, H. 1994.
\newblock {Statlog (German Credit Data)}.
\newblock UCI Machine Learning Repository.
\newblock {DOI}: https://doi.org/10.24432/C5NC77.

\bibitem[{Hort et~al.(2022)Hort, Chen, Zhang, Sarro, and
  Harman}]{hort2022BiasMitigationMachinea}
Hort, M.; Chen, Z.; Zhang, J.~M.; Sarro, F.; and Harman, M. 2022.
\newblock Bias mitigation for machine learning classifiers: {A} comprehensive
  survey.
\newblock \emph{arXiv preprint arXiv:2207.07068}.

\bibitem[{Jiang et~al.(2020)Jiang, Lee, Song, and Wong}]{jiang2020improved}
Jiang, H.; Lee, Y.~T.; Song, Z.; and Wong, S. C.-w. 2020.
\newblock An improved cutting plane method for convex optimization,
  convex-concave games, and its applications.
\newblock In \emph{Proceedings of the 52nd Annual ACM SIGACT Symposium on
  Theory of Computing}, 944--953.

\bibitem[{Jiang and Nachum(2020)}]{jiang2020IdentifyingCorrectingLabel}
Jiang, H.; and Nachum, O. 2020.
\newblock Identifying and correcting label bias in machine learning.
\newblock In \emph{Proceedings of the 23rd International Conference on
  Artificial Intelligence and Statistics}, 702--712. PMLR.

\bibitem[{Kamiran and
  Calders(2010)}]{kamiran2010ClassificationNoDiscrimination}
Kamiran, F.; and Calders, T. 2010.
\newblock Classification with no discrimination by preferential sampling.
\newblock In \emph{Proceedings of the 19th Machine Learning Conference of
  Belgium and The Netherlands}, volume~1. Citeseer.

\bibitem[{Kamiran and Calders(2012)}]{kamiran2012data}
Kamiran, F.; and Calders, T. 2012.
\newblock Data preprocessing techniques for classification without
  discrimination.
\newblock \emph{Knowledge and Information Systems}, 33(1): 1--33.

\bibitem[{Kantorovitch(1958)}]{kantorovitch1958translocation}
Kantorovitch, L. 1958.
\newblock On the translocation of masses.
\newblock \emph{Management Science}, 5(1): 1--4.

\bibitem[{Khachiyan(1980)}]{Khachiyan1980polynomial}
Khachiyan, L.~G. 1980.
\newblock Polynomial algorithms in linear programming.
\newblock \emph{USSR Computational Mathematics and Mathematical Physics},
  20(1): 53--72.

\bibitem[{Klee and Minty(1972)}]{klee1972good}
Klee, V.; and Minty, G.~J. 1972.
\newblock How good is the simplex algorithm.
\newblock \emph{Inequalities}, 3(3): 159--175.

\bibitem[{Lee and Sidford(2014)}]{lee2014path}
Lee, Y.~T.; and Sidford, A. 2014.
\newblock Path finding methods for linear programming: Solving linear programs
  in $\tilde{O}(\sqrt{T})$ iterations and faster algorithms for maximum flow.
\newblock In \emph{2014 IEEE 55th Annual Symposium on Foundations of Computer
  Science}, 424--433. IEEE.

\bibitem[{Li and Liu(2022)}]{li2022AchievingFairnessNo}
Li, P.; and Liu, H. 2022.
\newblock Achieving fairness at no utility cost via data reweighing with
  influence.
\newblock In \emph{Proceedings of the 39th International Conference on Machine
  Learning}, 12917--12930. PMLR.

\bibitem[{Memoli, Smith, and Wan(2019)}]{memoli19awasstransf}
Memoli, F.; Smith, Z.; and Wan, Z. 2019.
\newblock The {W}asserstein Transform.
\newblock In Chaudhuri, K.; and Salakhutdinov, R., eds., \emph{Proceedings of
  the 36th International Conference on Machine Learning}, volume~97 of
  \emph{Proceedings of Machine Learning Research}, 4496--4504. PMLR.

\bibitem[{{MOSEK ApS}(2023)}]{mosek}
{MOSEK ApS}. 2023.
\newblock \emph{MOSEK Optimization Suite}.
\newblock Available on \url{https://docs.mosek.com/10.0/intro.pdf} Accessed:
  2023-08-03.

\bibitem[{Nesterov(2018)}]{nesterov2018lectures}
Nesterov, Y. 2018.
\newblock \emph{Lectures on convex optimization}, volume 137.
\newblock Springer.

\bibitem[{Peng and Vempala(2021)}]{peng2021solving}
Peng, R.; and Vempala, S. 2021.
\newblock Solving sparse linear systems faster than matrix multiplication.
\newblock In \emph{Proceedings of the 2021 ACM-SIAM Symposium on Discrete
  Algorithms}, 504--521. SIAM.

\bibitem[{Peyr{\'e}, Cuturi et~al.(2019)}]{peyre2019computational}
Peyr{\'e}, G.; Cuturi, M.; et~al. 2019.
\newblock Computational optimal transport: {W}ith applications to data science.
\newblock \emph{Foundations and Trends{\textregistered} in Machine Learning},
  11(5-6): 355--607.

\bibitem[{Redmond(2009)}]{misc_communities_and_crime_183}
Redmond, M. 2009.
\newblock {Communities and Crime}.
\newblock UCI Machine Learning Repository.
\newblock {DOI}: https://doi.org/10.24432/C53W3X.

\bibitem[{Renegar(2001)}]{renegar2001mathematical}
Renegar, J. 2001.
\newblock \emph{A mathematical view of interior-point methods in convex
  optimization}.
\newblock SIAM.

\bibitem[{Salazar et~al.(2021)Salazar, Santos, Ara{\'u}jo, and
  Abreu}]{salazar2021FAWOSFairnessAwareOversampling}
Salazar, T.; Santos, M.~S.; Ara{\'u}jo, H.; and Abreu, P.~H. 2021.
\newblock {FAWOS}: {F}airness-aware oversampling algorithm based on
  distributions of sensitive attributes.
\newblock \emph{IEEE Access}, 9: 81370--81379.

\bibitem[{Salimi et~al.(2019)Salimi, Rodriguez, Howe, and
  Suciu}]{salimi2019InterventionalFairnessCausal}
Salimi, B.; Rodriguez, L.; Howe, B.; and Suciu, D. 2019.
\newblock Interventional fairness: {C}ausal database repair for algorithmic
  fairness.
\newblock In \emph{Proceedings of the 2019 International Conference on
  Management of Data}, 793--810.

\bibitem[{Santambrogio(2015)}]{santambrogio2015optimal}
Santambrogio, F. 2015.
\newblock Optimal transport for applied mathematicians.
\newblock \emph{Birk{\"a}user, NY}, 55(58-63): 94.

\bibitem[{Sloane, Moss, and Chowdhury(2022)}]{sloane2022SiliconValleyLove}
Sloane, M.; Moss, E.; and Chowdhury, R. 2022.
\newblock A Silicon Valley love triangle: {H}iring algorithms, pseudo-science,
  and the quest for auditability.
\newblock \emph{Patterns}, 3(2).

\bibitem[{Villani et~al.(2009)}]{villani2009optimal}
Villani, C.; et~al. 2009.
\newblock \emph{Optimal transport: {O}ld and new}, volume 338.
\newblock Springer.

\bibitem[{Wang et~al.(2022)Wang, Cheng, Basu, Gupta, Selvaraj, and
  Mazumder}]{wang2022light}
Wang, H.; Cheng, M.; Basu, K.; Gupta, A.; Selvaraj, K.; and Mazumder, R. 2022.
\newblock A Light-speed Linear Program Solver for Personalized Recommendation
  with Diversity Constraints.
\newblock \emph{arXiv preprint arXiv:2211.12409}.

\bibitem[{Xu et~al.(2018)Xu, Yuan, Zhang, and
  Wu}]{xu2018FairGANFairnessawareGenerativea}
Xu, D.; Yuan, S.; Zhang, L.; and Wu, X. 2018.
\newblock {FairGAN}: {F}airness-aware generative adversarial networks.
\newblock In \emph{Proceedings of the 2018 IEEE International Conference on Big
  Data}, 570--575. IEEE.

\bibitem[{Yan, Kao, and Ferrara(2020)}]{yan2020FairClassBalancing}
Yan, S.; Kao, H.-t.; and Ferrara, E. 2020.
\newblock Fair class balancing: {E}nhancing model fairness without observing
  sensitive attributes.
\newblock In \emph{Proceedings of the 29th ACM International Conference on
  Information \& Knowledge Management}, 1715--1724.

\bibitem[{Zemel et~al.(2013)Zemel, Wu, Swersky, Pitassi, and
  Dwork}]{zemel2013learning}
Zemel, R.; Wu, Y.; Swersky, K.; Pitassi, T.; and Dwork, C. 2013.
\newblock Learning fair representations.
\newblock In \emph{Proceedings of the 30th International Conference on Machine
  Learning}, 325--333. PMLR.

\bibitem[{Zhang et~al.(2022)Zhang, Xing, Zou, and
  Wu}]{zhang2022ShiftingMachineLearning}
Zhang, A.; Xing, L.; Zou, J.; and Wu, J.~C. 2022.
\newblock Shifting Machine Learning for Healthcare from Development to
  Deployment and from Models to Data.
\newblock \emph{Nature Biomedical Engineering}, 6(12): 1330--1345.

\bibitem[{{\v{Z}}liobaite, Kamiran, and
  Calders(2011)}]{zliobaite2011HandlingConditionalDiscrimination}
{\v{Z}}liobaite, I.; Kamiran, F.; and Calders, T. 2011.
\newblock Handling conditional discrimination.
\newblock In \emph{Proceedings of the 11th IEEE International Conference on
  Data Mining}, 992--1001. IEEE.

\end{thebibliography}
\end{document}